\documentclass[accepted]{style}
\usepackage[american]{babel}

\usepackage{natbib}
\usepackage{mathtools}
\usepackage{nicefrac}
\usepackage{amssymb}
\usepackage{booktabs}
\usepackage{tabularx}
\usepackage{ulem}
\usepackage{tikz} 
\usepackage{amsthm}
\usepackage{cancel}

\usepackage{soul}
\setulcolor{black}

\newtheorem{proposition}{Proposition}
\newtheorem{theorem}{Theorem}

\usepackage{array}
\newcolumntype{L}{>{\raggedright\arraybackslash}X}
\newcolumntype{C}{>{\centering\arraybackslash}X}
\newcolumntype{R}{>{\raggedleft\arraybackslash}X}

\usepackage{multirow}
\usepackage{float}

\usepackage{pgfplots}
\pgfplotsset{compat=1.18}
\usetikzlibrary{patterns,patterns.meta,calc}

\title{Between Resolution Collapse and Variance Inflation:\\ Weighted Conformal Anomaly Detection in Low-Data Regimes}

\author[1]{\href{mailto:<oliver.hennhoefer@h-ka.de>}{Oliver~Hennhöfer}{}}
\author[1]{Christine~Preisach}
\affil[1]{%
    Intelligent Systems Research Group (ISRG)\\
    Karlsruhe University of Applied Sciences\\
    Germany
}
  
\begin{document}
\maketitle

\begin{abstract}
 Standard conformal anomaly detection provides marginal finite-sample guarantees under the assumption of exchangeability . However, real-world data often exhibit distribution shifts, necessitating a weighted conformal approach to adapt to local non-stationarity. We show that this adaptation induces a critical trade-off between the minimum attainable $p$-value and its stability. As importance weights localize to relevant calibration instances, the effective sample size decreases. This can render standard conformal $p$-values overly conservative for effective error control, while the smoothing technique used to mitigate this issue introduces conditional variance, potentially masking anomalies. We propose a continuous inference relaxation that resolves this dilemma by decoupling local adaptation from tail resolution via continuous weighted kernel density estimation. While relaxing finite-sample exactness to asymptotic validity, our method eliminates Monte Carlo variability and recovers the statistical power lost to discretization. Empirical evaluations confirm that our approach not only restores detection capabilities where discrete baselines yield zero discoveries, but outperforms standard methods in statistical power while maintaining valid marginal error control in practice.
\end{abstract}

\section{Introduction}\label{sec:introduction}

Anomaly detection aims to identify observations that deviate significantly from the majority of observations or do otherwise not \textit{conform} to an expected state of normality, indicating a distinct underlying data-generating mechanism at work \cite{Hawkins1980}. Yet, standard  detection approaches often lack statistical guarantees regarding the false alarm rate, which is problematic in safety-critical applications. Conformal Anomaly Detection (CAD) addresses this by offering a distribution-free framework to transform heuristic anomaly scores into valid $p$-values, enabling False Discovery Rate (FDR) control procedures \cite{Bates2023}.

Conformal validity assumes data exchangeability. In dynamic environments where the data distribution shifts over time, this assumption is violated. Weighted conformal approaches mitigate distribution shift by assigning higher importance to calibration samples resembling current test instances via likelihood ratios (\textit{covariate shift adaptation}). This localization induces a critical dilemma. As weights concentrate on a smaller effective sample size, empirical $p$-values become discretely coarse, leading to \textit{resolution collapse}, where the minimum $p$-value fails to meet the discovery threshold required for e.g. the Benjamini–Hochberg (BH) procedure \cite{Hochberg1995}.

While standard conformal theory proposes \textit{randomized smoothing} to resolve this granularity \cite{Jin2025}, we demonstrate that this theoretical fix comes at a practical cost in weighted regimes. When a test point is assigned a large weight (common under shift), smoothing introduces significant uniform noise to maintain exact validity. We show that this \textit{variance inflation} decreases the signal-to-noise ratio and degrades statistical power. Consequently, practitioners are (at worst) trapped between the zero power of the discrete estimator (due to inflated minimum attainable $p$-values) and low statistical efficiency of the randomized estimator (due to noise masking).

In this work, we address this conflict between local adaptation, resolution, and stability. Our contributions are:
\begin{itemize}
   \item \textbf{The Resolution--Variance Dilemma:} We formalize two failure modes of weighted CAD. As weight localization strength under covariate shift adaptation increases, the standard weighted conformal $p$-value method exhibits lower-bound $p$-values (\textit{resolution collapse}), while the randomized variant shows rejection inconsistency (\textit{variance inflation}). Both can severely reduce statistical power, particularly in low-data regimes.
    \item \textbf{Stabilized Continuous Inference:} We propose a continuous inference relaxation that decouples local adaptation from tail resolution via continuous weighted kernel density estimation. This approach eliminates the lower $p$-value bound without introducing the same degree of Monte Carlo noise of the randomized approach.
    \item \textbf{Empirical Validation:} We demonstrate that our approach restores detection capabilities in pathological regimes where discrete baselines yield zero discoveries, and significantly outperforms randomized baselines in statistical power by mitigating variance, all while maintaining valid marginal error control.
\end{itemize}

\section{Prior Works}

Conformal prediction under covariate shift was established by \citet{Tibshirani2019}, introducing the weighted exchangeability framework to correct for distribution shifts using importance weights (\textit{Radon–Nikod{\'y}m}). For CAD under covariate shift, \citet{Jin2025} extended this to multiple testing via \textit{Weighted Conformalized Selection} (WCS), 
necessitated by the failure of weighted conformal $p$-values to satisfy Positive Regression Dependence on a Subset (PRDS), under which BH guarantees FDR control.

\section{Preliminaries}

We consider the unsupervised anomaly detection setting. Let $\mathcal{D}_{\text{cal}} = \{z_1, \dots, z_N\}$ be a calibration set of $N$ observations drawn from a training distribution $P$. We evaluate a test batch $Z_{\text{test}} = \{z_{N+1}, \dots, z_{N+m}\}$ of $m$ instances drawn from a (possibly shifted) distribution $Q$. We fit a scoring function $s: \mathcal{Z} \to \mathbb{R}$ on $P$, where larger scores indicate greater deviation from normality. Let $s_i = s(z_i)$ denote the score for the $i$-th calibration observation, $i \in \{1,\ldots,N\}$.

\subsection{Weighted Conformal Anomaly Detection}

Standard conformal prediction assumes exchangeability between calibration and test data ($P = Q$). To accommodate covariate shift, we employ the weighted conformal framework \cite{Tibshirani2019}, which reweighs observations by the likelihood ratio $w(z) = dQ/dP(z)$ as estimated by a density estimator, see Section~\ref{subsec:weight_estimation}.

There are two standard approaches to constructing the weighted $p$-value for a test point $z_j$ with score $s_j$:

\paragraph{1. The Deterministic Estimator.}
The standard discrete $p$-value is conservative and includes the test point weight $w_j$ in the numerator to ensure validity without randomization:
\begin{equation}\label{eq:weighted_ecdf_cons}
    p_j^{\text{discrete}} = \frac{\sum_{i=1}^N w_i \,\mathbb{I}(s_i \ge s_j) + w_j}{\sum_{i=1}^N w_i + w_j}.
\end{equation}
This guarantees marginal super-uniformity \mbox{$\mathbb P\!\left(p_j^{\mathrm{disc}}\le \alpha\right)\le \alpha,\qquad \forall \alpha\in[0,1]$}, with a minimum attainable $p$-value $p_j^{\mathrm{disc}}=\nicefrac{w_j}{\sum_{i=1}^N w_i+w_j}$, even if $s_j \to \infty$.

\paragraph{2. The Randomized Estimator.}
To remove discretization effects, weighted conformal $p$-values introduce auxiliary randomness $U_j\sim\mathrm{Unif}[0,1]$ as in \citet{Jin2025}. In the unweighted exchangeable setting ($w\equiv 1$), the standard randomized conformal $p$-value is (marginally) valid and, under continuity/no-ties conditions, is exactly $\mathrm{Unif}[0,1]$ under the null. In the weighted covariate-shift setting, the same randomized construction yields marginal super-uniformity:

\begin{equation}\label{eq:weighted_ecdf_rand}
\begin{aligned}
p_j^{\text{rand}}
&=
\frac{\sum_{i=1}^N w_i \,\mathbb{I}(s_i > s_j)}
     {\sum_{i=1}^N w_i + w_j}
\\
&\quad
+
\frac{U_j\!\left(w_j + \sum_{i=1}^N w_i \,\mathbb{I}(s_i = s_j)\right)}
     {\sum_{i=1}^N w_i + w_j}
\text{.}
\end{aligned}
\end{equation}

The randomization spreads the mass of the test point’s own weight $w_j$, together with any calibration mass tied at $s_j$, uniformly over an interval. Hence, even if there are no calibration ties (e.g., for extreme scores outside the calibration range), the term $U_j w_j$ randomizes $p_j^{\text{rand}}$. If the scores are continuous (so that $\sum_i w_i \mathbb{I}(s_i = s_j)=0$), \eqref{eq:weighted_ecdf_rand} reduces to the simpler expression with $U_j w_j$. In all cases, the smoothed estimator can take values arbitrarily close to zero, eliminating the lower $p$-value bound of the deterministic estimator.

\subsection{False Discovery Rate Control}

For the batch \(Z_{\text{test}}\), we test \(m\) null hypotheses \mbox{\(H_{0,j}:\ z_j \text{ is an inlier}\)} and aim to control the FDR at level \(\alpha\).

\paragraph{Multiple Testing.}
In the unweighted setting ($w \equiv 1$), we apply the BH procedure. In the weighted setting, we apply WCS, which wraps the weighted $p$-values (discrete or randomized) to guarantee finite-sample FDR control despite complex dependencies induced by weight estimation. WCS relies on a self-consistency condition where the number of rejected hypotheses must support the rejection threshold.

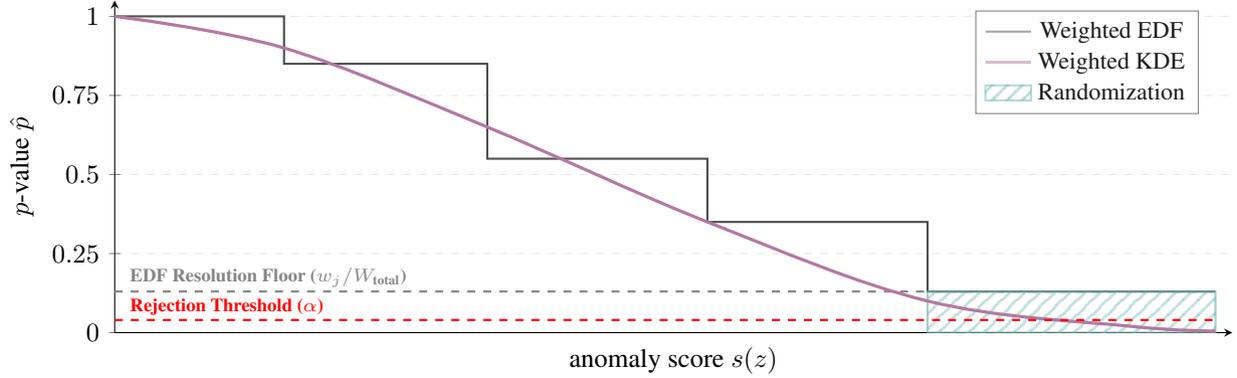
\begin{figure*}[t]
\centering
\begin{tikzpicture}
\begin{axis}[
    width=0.95\textwidth,
    height=6cm,
    xlabel={anomaly score $s(z)$},
    ylabel={$p$-value $\hat{p}$},
    xmin=0, xmax=6.6,
    ymin=0, ymax=1.05,
    axis lines=left,
    ytick={0, 0.25, 0.5, 0.75, 1},
    xtick=\empty, % Hide x ticks for schematic look
    legend pos=north east,
    legend style={font=\footnotesize, fill opacity=0.9, draw opacity=0.5},
    grid=major,
    grid style={dashed, gray!20},
]

% Define Colors based on your document style
\definecolor{colCons}{RGB}{80, 80, 80}    % Gray for Conservative
\definecolor{colRand}{RGB}{114, 183, 178} % Teal for Randomized
\definecolor{colKDE}{RGB}{178, 121, 162}  % Purple for KDE

% --- 1. The Resolution Floor (Dashey line) ---
% The limit of the conservative estimator is w_test / W_total
\draw[dashed, thick, colCons!70] (axis cs:0,0.13) -- (axis cs:6.5,0.13) 
    node[anchor=south west, pos=0.005, yshift=-2pt, font=\scriptsize] {\textbf{EDF Resolution Floor ($w_j / W_{\text{total}}$)}};

% --- 2. The Conservative Weighted EDF (Step Function) ---
% Steps are irregular to represent unequal weights
\addplot[
    const plot,
    thick,
    colCons,
] coordinates {
    (0,1.0) 
    (1.0, 1.0) (1.0, 0.85) 
    (2.2, 0.85) (2.2, 0.55) % Big drop (large weight calibration point)
    (3.5, 0.55) (3.5, 0.35) 
    (4.8, 0.35) (4.8, 0.13) % Hits the floor
    (6.5, 0.13)             % Stays at floor
};
\addlegendentry{Weighted EDF}

% --- 3. The Randomized Range (Variance Inflation) ---
\draw[
    thick,                  % Thickness of the border
    draw=colRand,           % Color of the border
    opacity=0.65,
    pattern={Lines[angle=45, distance={4pt}, line width={1pt}]}, % Thicker pattern lines
    pattern color=colRand   % Color of the pattern lines (no !50)
] (axis cs:4.8,0) rectangle (axis cs:6.5,0.13);

% --- 4. The Continuous KDE (Ours) ---
% Smooth curve that cuts through steps and decays to 0
\addplot[
    smooth,
    very thick,
    colKDE,
    tension=0.6
] coordinates {
    (0, 1.0)
    (1, 0.90)
    (2.2, 0.65)
    (3.5, 0.35)
    (4.8, 0.10)
    (6.0, 0.02)
    (6.5, 0.005)
};

\addlegendentry{Weighted KDE}

% Dummy plot for legend
\addplot[
    area legend, 
    draw=colRand, 
    thick,
    pattern={Lines[angle=45, distance={4pt}, line width={1pt}]},
    pattern color=colRand
] coordinates {(0,0)};
\addlegendentry{Randomization}

% --- Annotations ---

% Annotation: Rejection Threshold
\draw[dashed, thick, red] (axis cs:0,0.04) -- (axis cs:6.5,0.04) 
    node[anchor=south west, pos=0.005, yshift=-2pt, font=\scriptsize] {\textbf{Rejection Threshold ($\alpha$)}};

% Annotation: The "Trap"
\node[coordinate] (trap_point) at (axis cs: 5.8, 0.13) {};
\node[coordinate] (kde_point) at (axis cs: 5.8, 0.03) {};

\end{axis}
\end{tikzpicture}
\caption{\textbf{The Resolution--Variance Dilemma.} Under distribution shift, high importance weights create large steps in the conservative estimator, imposing a \textit{resolution floor} of minimum attainable $p$-values that prevents rejection even for extreme scores. Standard randomization resolves the floor but introduces \textit{variance inflation} via noise that can mask the signal. The proposed Weighted KDE decouples resolution from sample size, enabling rejection below the floor without stochastic noise.}
\label{fig:dilemma_schematic}
\end{figure*}

\section{The Dilemma}
\label{sec:dilemma}
In dynamic environments, the weights $w(z)$ adapt to local distribution shifts. We demonstrate that this adaptation forces a critical trade-off between resolution (\textit{rejection ability}) and stability (\textit{rejection consistency}), see Figure~\ref{fig:dilemma_schematic}.

\paragraph{Failure Mode A: Resolution Collapse (Discretization).}
For the conservative estimator (Eq.~\ref{eq:weighted_ecdf_cons}), the smallest $p$-value is lower-bounded by the test point’s relative weight,
\[
p^{\text{discrete}}_{j} \;\ge\; \frac{w_j}{\sum_{i=1}^N w_i + w_j}.
\]

\iffalse
To quantify when discretization prevents discoveries, we compare the minimum attainable $p$-values to the corresponding \textit{oracle} BH rejection threshold. If the BH procedure were to reject exactly the \(k\) truly anomalous hypotheses among \(m\) tests at significance level \(\alpha\), this threshold would be \((k/m)\alpha\). We therefore define the \textit{Detectability Ratio} \(\delta\) as the ratio between the minimum attainable $p$-value (achieved when $s_j > \max_{i} s_i$) and the BH rejection threshold:

\begin{equation}
    \delta_j
    \;=\;
    \frac{\left(\nicefrac{w_j}{\sum_{i=1}^N w_i + w_j}\right)}
         {(k/m)\cdot \alpha }.
\end{equation}

When weights localize so that $\delta_j>1$, even a maximally extreme score cannot yield a $p$-value below the cutoff, rendering the discrete weighted $p$-value is \emph{incapable} of supporting $k$ rejections under BH (i.e., strict power loss). In the unweighted setting ($w\equiv 1$), the floor reduces to $1/(N+1)$.
\fi

To quantify when discretization prevents discoveries, we compare the minimum attainable $p$-value (for $s_j>\max_i s_i$) to a \textit{heuristic} BH scale at target FDR level $\alpha$. If BH ends up making $R$ rejections, its cutoff is $(R/m)\alpha$. Motivated by this, we define a detectability ratio $\delta(r)$ relative to a \textit{putative} rejection count $r\in\{1,\dots,m\}$ as
\begin{equation}\label{eq:detectability_ratio}
\delta_j(r)
=
\frac{\displaystyle \nicefrac{w_j}{\sum_{i=1}^N w_i + w_j}}
{\displaystyle (r/m)\cdot\alpha}.
\end{equation}
If $\delta_j(r)>1$, then even the most extreme right-tail score ($s_j>\max_i s_i$) cannot produce $p_j^{\mathrm{disc}}\le (r/m)\alpha$, so BH cannot reach $r$ rejections using the discrete weighted \mbox{$p$-values} on this realized weight configuration.

In the unweighted setting ($w\equiv 1$), the minimum attainable $p$-value in the numerator of Eq.~\ref{eq:detectability_ratio} reduces to $1/(N+1)$.

\iffalse
\paragraph{Failure Mode B: Variance Inflation (Randomization).}
The randomized estimator (Eq.~\ref{eq:weighted_ecdf_rand}) resolves the floor issue ($\min p^{\text{rand}} \to 0$) by introducing a stochastic noise term:

\begin{equation}
    \text{Noise}_j
    \;=\;
    U_j \cdot \left( \frac{w_j}{\sum_{i=1}^N w_i + w_j} \right),
    \qquad
    U_j \sim \mathrm{Unif}(0,1).
\end{equation}

Conditional on the scores, $p_j^{\text{rand}}$ is uniform on the interval

\[
\left[
\begin{aligned}
&\frac{\sum_i w_i \mathbb{I}(s_i > s_j)}
       {\sum_i w_i + w_j},
\\[0.5ex]
&\frac{\sum_i w_i \mathbb{I}(s_i > s_j)
      + w_j
      + \sum_i w_i \mathbb{I}(s_i = s_j)}
      {\sum_i w_i + w_j}
\end{aligned}
\right]
\]

so the noise amplitude is the interval width. In the extreme-tail, no-ties case, this reduces to $\text{Unif}[0, w_j/(\sum_i w_i + w_j)]$, which yields the variance expression above. With ties, the width increases by the tied calibration weight.

Under distribution shift, $w_j$ can become large relative to the effective calibration sum. In the extreme‑tail, no‑ties case, the $p$-value for a clear anomaly effectively becomes a draw from $\text{Unif}[0, \text{Noise}_j]$. If $\text{Noise}_j$ is large, a true anomaly will fail to be rejected in a significant fraction of random realizations, simply due to an unlucky draw of $U_j$. We term this circumstance \textit{Variance Inflation}.
\fi

\paragraph{Failure Mode B: Variance Inflation (Randomization).}
The randomized estimator (Eq.~\ref{eq:weighted_ecdf_rand}) replaces the discrete step at the test point by spreading a deterministic mass uniformly over an interval. Let $W_{\mathrm{cal}}=\sum_{i=1}^N w_i$ so that
\[
W_{\mathrm{total}}=W_{\mathrm{cal}}+w_j,\qquad
W_{=}(s_j)=\sum_{i=1}^N w_i\,\mathbb I(s_i=s_j).
\]
Then conditional on the realized scores and respective weights, $p_j^{\mathrm{rand}}$ is uniform on an interval of deterministic width
\[
\mathrm{width}_j=\frac{w_j+W_{=}(s_j)}{W_{\mathrm{total}}}.
\]
Equivalently, one may write the randomization contribution as
\[
\mathrm{Rand}_j = U_j\cdot \mathrm{width}_j,\qquad U_j\sim\mathrm{Unif}(0,1),
\]
so the \textit{amplitude} is $\mathrm{width}_j$ and the \textit{random noise} is the uniform draw on $[0,\mathrm{width}_j]$.

In the extreme-right-tail, no-ties case $s_j>\max_i s_i$ and with $W_{=}(s_j)=0$, we have
\[
p_j^{\mathrm{rand}} = U_j\cdot \frac{w_j}{W_{\mathrm{total}}}
\sim \mathrm{Unif}\!\left[0,\frac{w_j}{W_{\mathrm{total}}}\right].
\]
Thus, if the relative test weight $w_j/(W_{\mathrm{cal}}+w_j)$ is large, the randomized $p$-value exhibits substantial conditional variability even for extremely large scores. A true anomaly might fail to be rejected, simply due to an unlucky draw of $U_j$.

\iffalse
\subsection{Effective Sample Size}
Both failure modes are driven by the reduction in effective sample size \cite{Kish1965}, defined as
\begin{equation}
    N_{\text{eff}} = \frac{\left( \sum_{i=1}^{N} w_i \right)^2}{\sum_{i=1}^{N} w_i^2}.
\end{equation}
As $N_{\text{eff}}$ drops, the relative mass of the test point increases. This simultaneously raises the floor for the conservative estimator (causing collapse) and amplifies the noise amplitude for the randomized estimator (causing variance). Resolving this dilemma requires an approach that decouples $N_{\text{eff}}$ from $p$-value granularity without stochastic noise, motivating the continuous inference procedure introduced in Section~\ref{sec:method}.
\fi

\subsection{Effective sample size}
Key driver of both \textit{collapse} (discrete floor) and \textit{variance inflation} (randomization amplitude) is the concentration of calibration weights. Following \citet{Kish1965}, for nonnegative weights $w_1,\dots,w_N$, define the effective sample size as
\begin{equation}
    N_{\text{eff}} = \frac{\left( \sum_{i=1}^{N} w_i \right)^2}{\sum_{i=1}^{N} w_i^2}.
\end{equation}
When $N_{\text{eff}}$ is small, a few calibration instances carry most of the mass. 
The test point's relative weight $w_j/W_{\mathrm{total}}$ tends to increase, raising the minimum attainable conformal $p$-value. The randomized conformal $p$-value interval width scales like $\mathrm{width}_j$, increasing their conditional variance.

\section{Continuous Weighted Conformal Inference}
\label{sec:method}

We propose an inference procedure that first adapts to covariate shift via density ratio estimation, and then constructs high-resolution $p$-values using weighted kernel density estimation (KDE). By modelling the underlying score distribution rather than counting discrete exceedances, we decouple the ability to reject from the effective sample size.

\subsection{Covariate Shift Adaptation}
\label{subsec:weight_estimation}

To account for distribution shift between the calibration distribution $P$ and the test distribution $Q$, we estimate the likelihood ratio $w(z) = dQ/dP(z)$. Following the \textit{density ratio trick} (i.e., importance weighting)  \cite{Sugiyama2008,Sugiyama2012}, we reduce this to a probabilistic classification problem. We train a classifier (e.g. Random Forest) to discriminate between calibration samples ($Y=0$, $X \sim P$) and test samples ($Y=1$, $X \sim Q$). Weights for any test or calibration instance $z_i$ are given by
\begin{equation}
    \hat{w}_i = \frac{\hat{P}(Y=1 \mid z_i)}{\hat{P}(Y=0 \mid z_i)} \cdot \frac{N_{\text{cal}}}{N_{\text{test}}}.
\end{equation}
We apply winsorization to mitigate the effect of extreme importance weights from limited support overlap between $P$ and $Q$, clipping weights to the $[\gamma, 1-\gamma]$ quantiles of the observed weight distribution ($\gamma=0.05$). The procedure assumes covariate shift is invariant across domains and the support of $Q$ is sufficiently contained in that of $P$.

\subsection{Weighted Kernel Density Estimation}
\label{subsec:kde_pvalue}

Standard weighted conformal $p$-values are discrete. Canonical weighted conformal methods add randomization to interpolate weighted ranks yielding continuous $p$-values, introducing Monte Carlo noise. To avoid discretization and Monte Carlo randomization, we instead approximate the weighted score distribution using kernel density estimation.

Let $s_i=s(z_i)$ be calibration scores and $\hat w_i\ge 0$ be calibration weights. Define the weighted KDE
\begin{equation}
\hat f^{\,w}(s)
=
\frac{1}{h\sum_{k=1}^N \hat w_k}\sum_{i=1}^N \hat w_i\,
K\!\left(\frac{s-s_i}{h}\right),
\end{equation}

where $K(\cdot)$ is a symmetric kernel function satisfying $\int_{\mathbb R}K(u)\,du=1$, with bandwidth $h>0$ (e.g., the Gaussian kernel). Let
\[
\Phi_K(t)=\int_{-\infty}^{t}K(u)\,du
\]
denote the CDF associated with $K$.

\paragraph{Bandwidth Selection.}
We select bandwidth $h$ via leave-one-out cross-validation to maximize the weighted log-likelihood. This data-driven approach adapts smoothness to $N_{\text{eff}}$, balancing the risk of over-smoothing (bias) against spurious modes (variance).

\subsection{Continuous $p$-value Construction}

For a test score $s_{j}$, define the right-tail $p$-value under the fitted weighted calibration score density $\hat f^{\,w}$ as
\[
\hat p_{\mathrm{KDE}}(s_{j})=\int_{s_{j}}^{\infty} \hat f^{\,w}(t)\,dt.
\]
With $\Phi_K(t)=\int_{-\infty}^t K(u)\,du$, this equals
\begin{equation}
\label{eq:kde_pvalue}
\hat p_{\mathrm{KDE}}(s_{j})
=
1-\sum_{i=1}^N \frac{\hat w_i}{\sum_{k=1}^N \hat w_k}\,
\Phi_K\!\left(\frac{s_{j}-s_i}{h}\right).
\end{equation}

\paragraph{Granularity and Stability.}
The estimator $\hat{p}_{\text{KDE}}(s)$ maps continuously to $[0, 1]$, eliminating the discrete floor $w_{j} / W_{\text{total}}$ inherent in weighted rank-based $p$-values. Unlike the canonical randomized weighted conformal $p$-value---which uses \mbox{$U_j\sim \mathrm{Unif}[0,1]$} to randomize the weighted rank (not only to break ties, but also to interpolate in extreme tail/out-of-range cases)---our KDE smoothing removes the auxiliary tie-breaking and interpolation randomness ($U_j$) used by randomized conformal $p$-values, yielding deterministic $p$-values conditional on the fitted KDE. This does not remove statistical estimation errors (finite-sample KDE error, weight-estimation error, and bandwidth-selection variability), but it removes the additional noise by $U_j$.

\subsection{Integration with Multiple Testing}
\label{subsec:integration}

The proposed continuous estimator $\hat{p}$ acts as a continuous surrogate for empirical conformal $p$-values in downstream multiple-testing pipelines. However, we emphasize that it is not guaranteed to inherit the finite-sample conformal validity properties of weighted conformal $p$-values.

\begin{itemize}
    \item \textbf{Unweighted Regime:} In the absence of shift, substituting continuous estimates into the BH procedure mitigates discretization conservatism in small-$N$ settings.
    \item \textbf{Weighted Regime:} Under covariate shift, applying BH directly to discrete weighted conformal $p$-values is not theoretically justified in general because their dependence can violate PRDS when weights are data-dependent. 
    Weighted Conformalized Selection (WCS) is designed to restore finite-sample FDR control in this setting and operates in two stages:
    \begin{enumerate}
        \item \textbf{Selection:} A preliminary rejection set is formed using a leave-one-out self-consistency check.
        \item \textbf{Pruning:} The set is reduced to control finite-sample FDR via three possible pruning strategies: \textit{Deterministic} (strict counting), \textit{Homogeneous} (shared randomization $\xi \sim U[0,1]$), and \textit{Heterogeneous} (individual randomization $\xi_j \sim U[0,1]$).
    \end{enumerate}
\end{itemize}
    
\textbf{Robustness:} Discrete weighted estimators are sensitive to the pruning method. As noted in \citet{Jin2025}, \textit{deterministic} pruning may yield low power due to coarse resolution, requiring \textit{homogeneous} or \textit{heterogeneous} randomization to smooth threshold effects. In contrast, our KDE-based surrogate $p$-values are continuous and do not cluster at the discrete mass points $w_j/W_{\text{total}}$. Consequently, the choice of WCS pruning strategy becomes asymptotically equivalent, as the probability of a $p$-value falling exactly on the rejection threshold is zero. We nevertheless retain the WCS wrapper with homogeneous pruning for all evaluated methods to ensure a uniform experimental pipeline.

\textit{Remark on Dependence and Guarantees:}
Weighted conformal $p$-values may violate the PRDS property, so applying BH directly is not covered by standard finite-sample theory. WCS attains finite-sample FDR control under $H_{\text{0}}$ (and suitable covariate-shift weights) by calibrating each test unit via auxiliary (leave-one-out) $p$-values \cite{Jin2025}. When we replace discrete weighted conformal $p$-values with our surrogate $p$-values, these guarantees do not automatically carry over: the KDE bandwidth and weight selection introduces global dependencies across calibration scores. Furthermore, when WCS is applied to our surrogate conformal $p$-values (without recomputing candidate-dependent auxiliary $p$-values), it often behaves similarly to the BH procedure on the same surrogate $p$-values. We assess the robustness of this approximation empirically in Section~\ref{sec:evaluation}.

\section{Theoretical Analysis}
\label{sec:theory}

Standard CAD provides \textit{finite-sample} marginal validity under exchangeability. In weighted settings, strictly maintaining this guarantee prompts a choice: the \textit{resolution floor} of the discrete estimator or the \textit{variance inflation} induced by randomization. We analyze these trade-offs, framing it as a trilemma among \textit{validity} (finite-sample guarantee), \textit{stability} (rejection consistency), and \textit{resolution} (rejection ability).

\subsection{Asymptotic Marginal Validity}

Let $Q_0$ denote the inlier distribution of the test points. For each test unit $j$, consider the null hypothesis $H_{0,j}:\; z_j \sim Q_0$, i.e., that test point $j$ is an inlier. A natural marginal validity target for a $p$-value $\hat p(z)$ is
\[
\mathbb{P}_{Z\sim Q_0}\bigl(\hat p(Z)\le u\bigr)\le u,\qquad \forall u\in[0,1],
\]
where the probability is over the calibration sample used to construct $\hat p$, and an independent null test point $Z\sim Q_0$. Let $F_0^w$ denote the (weighted) CDF of the null score $s(Z)$ under $Z\sim Q_0$. The ideal probability integral transform yields
\[
p^*(z) = 1 - F_0^w(s(z)),
\]
which is exactly $\mathrm{Unif}[0,1]$ under $H_{0,j}$ when $F_0^w$ is continuous. Our estimator $\hat p(z)$ approximates this target via estimated weights $\hat w$ and a weighted KDE.

\iffalse
\begin{theorem}[Consistency and asymptotic marginal validity of KDE $p$-values]\label{thm:asymp_valid}
Assume (i) $F_0^w$ is continuous; (ii) $\sup_z|\hat w(z)-w(z)|\xrightarrow{p}0$ and standard boundedness/moment conditions ensuring weighted KDE consistency; (iii) $K$ is bounded with $\int K=1$, $h_N\to0$, and $N h_N\to\infty$. Then
\[
\sup_t\bigl|\widehat F_0^w(t)-F_0^w(t)\bigr|\xrightarrow{p}0
\quad\text{and}\quad
\sup_z\bigl|\hat p(z)-p^*(z)\bigr|\xrightarrow{p}0.
\]
Moreover, for each fixed $u\in[0,1]$,
\[
\mathbb{P}_{Z\sim Q_0}\!\bigl(\hat p(Z)\le u\bigr)\le u+o(1),
\]
i.e., $\hat p$ is asymptotically (marginally) super-uniform.
\end{theorem}
\fi

\begin{theorem}[Consistency and asymptotic marginal validity of KDE $p$-values]\label{thm:asymp_valid}
Assume (i) $F_0^w$ is continuous; (ii) $\sup_z|\hat w(z)-w(z)|\xrightarrow{p}0$ and standard regularity conditions ensuring weighted KDE consistency; (iii) $K$ is bounded with $\int K=1$, the bandwidth $h_N\to0$, and \textbf{$N h_N/\log N\to\infty$}. Let $Z\sim Q_0$ be independent of the calibration data used to construct $\hat w$, $\widehat F_0^w$, and $\hat p$. Then
\[
\sup_t\bigl|\widehat F_0^w(t)-F_0^w(t)\bigr|\xrightarrow{p}0
\quad\text{and}\quad
\sup_z\bigl|\hat p(z)-p^*(z)\bigr|\xrightarrow{p}0.
\]
Moreover, for each fixed $u\in[0,1]$,
\[
\mathbb{P}_{Z\sim Q_0}\!\bigl(\hat p(Z)\le u\bigr)\le u+o(1),
\]
i.e., $\hat p$ is asymptotically (marginally) super-uniform.\footnote{A proof sketch is provided in Appendix~\ref{sec:proof_sketch}.}
\end{theorem}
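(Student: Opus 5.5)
The plan is to split the error $\widehat F_0^w - F_0^w$ into three pieces and control each uniformly in $t$. Write $\widehat F_0^w(t)=\sum_i \hat w_i\Phi_K((t-s_i)/h_N)/\sum_k \hat w_k$, so that $\hat p(z)=1-\widehat F_0^w(s(z))$ by \eqref{eq:kde_pvalue}. Introduce two intermediate objects. The oracle-weight smoothed CDF is
\[
\widetilde F_N(t)=\frac{\sum_i w_i\,\Phi_K\bigl((t-s_i)/h_N\bigr)}{\sum_k w_k},
\]
and its population counterpart is
\[
\bar F_{h}(t)=\frac{\mathbb E_P\bigl[w(Z)\Phi_K((t-s(Z))/h_N)\bigr]}{\mathbb E_P[w(Z)]}=\int F_0^w(t-h_N u)K(u)\,du.
\]
The second equality holds because $\mathbb E_P[w\,g]=\mathbb E_{Q_0}[g]$ for the likelihood ratio $w$. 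It also uses $\mathbb E_P[w]=1$ and Fubini applied to $\Phi_K$. Then
\[
\sup_t|\widehat F_0^w-F_0^w|\le \sup_t|\widehat F_0^w-\widetilde F_N|+\sup_t|\widetilde F_N-\bar F_h|+\sup_t|\bar F_h-F_0^w|,
\]
and I treat the three terms (weight error, stochastic error, bias) in turn.

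\textbf{Bias.} $F_0^w$ is continuous and bounded, so it is uniformly continuous on $\mathbb R$. Fix $\varepsilon>0$ and choose $M$ with $\int_{|u|>M}|K|<\varepsilon$. Then $|\bar F_h(t)-F_0^w(t)|\le \int|K|\sup_{|\delta|\le h_NM}|F_0^w(t-\delta)-F_0^w(t)|+2\varepsilon$, which tends to $0$ uniformly in $t$ as $h_N\to0$.

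\textbf{Weight error.} Since $0\le\Phi_K\le1$ (for $K\ge0$; otherwise use $\sup|\Phi_K|<\infty$), the ratio of weighted sums satisfies
\[
|\widehat F_0^w-\widetilde F_N|\le \frac{2\sum_i|\hat w_i-w_i|}{\sum_k \hat w_k}\cdot C .
\]
By (ii), $\frac1N\sum_i|\hat w_i-w_i|\le\sup_z|\hat w-w|\to_p0$. The law of large numbers gives $\frac1N\sum_k w_k\to_p 1>0$. Together these give $\frac1N\sum_k\hat w_k\to_p1$, so this term is $o_p(1)$.

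\textbf{Stochastic term.} This is the main obstacle, because it needs uniformity in $t$ for i.i.d.\ weighted sums. I would write $\widetilde F_N$ as a ratio of the empirical means $\frac1N\sum_i w_i\Phi_K((t-s_i)/h_N)$ and $\frac1N\sum_i w_i$. The denominator is handled by the LLN. For the numerator, the class $\{z\mapsto w(z)\Phi_K((t-s(z))/h):t\in\mathbb R,h>0\}$ is a VC-type class with envelope $w$, because $\Phi_K$ is bounded and of bounded variation. So a uniform Glivenko--Cantelli theorem applies given $\mathbb E_P w<\infty$ (which is implicit in the ``standard regularity conditions''). A cruder route for monotone $\Phi_K$ is a Pólya-type argument: check pointwise convergence at finitely many quantiles and use monotonicity. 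With bounded weights one can instead use Talagrand/Einmahl--Mason bounds, which is where $Nh_N/\log N\to\infty$ enters; for the integrated CDF, $h_N\to0$ alone would suffice.

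\textbf{Conclusion.} Combining the three terms gives $\sup_t|\widehat F_0^w-F_0^w|\to_p0$. Since $\hat p(z)-p^*(z)=F_0^w(s(z))-\widehat F_0^w(s(z))$, the second claim follows immediately. For marginal validity, set $\Delta_N=\sup_z|\hat p-p^*|$. For any $\eta>0$,
\[
\mathbb P(\hat p(Z)\le u)\le \mathbb P(p^*(Z)\le u+\eta)+\mathbb P(\Delta_N>\eta).
\]
Independence of $Z$ from the calibration data, and uniformity of $p^*(Z)$ under continuity (i), bound this by $u+\eta+o(1)$. Letting $\eta\downarrow0$ along a sequence gives $u+o(1)$.
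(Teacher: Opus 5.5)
Your proof is correct and has the same skeleton as the paper's sketch. Both split $\widehat F_0^w-F_0^w$ by the triangle inequality into three terms. The first is a weight-estimation term, which you bound, as the paper does, by the $\ell_1$ distance between the normalized weight vectors. The other two are a stochastic term and a smoothing-bias term. Both then use $\sup_z|\hat p-p^*|\le\sup_t|\widehat F_0^w-F_0^w|$ and a sup-norm perturbation of the uniform variable $p^*(Z)$.

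The differences are in how the bias and stochastic terms are justified, and in both your version assumes less. For the bias, the paper posits $r$ derivatives of $f_0^w$ and an order-$r$ kernel to get $O(h_N^r)$. You use only the uniform continuity of $F_0^w$ implied by (i). That gives no rate, but consistency is all the theorem claims.

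For the stochastic term, the paper centers at $\mathbb E[\widetilde F_0^w(t)\mid w]$ and asserts an $O_p(N^{-1/2})$ bound under bounded weights and a monotone $\Phi_K$. Your Glivenko--Cantelli argument over the class indexed by $(t,h)$ buys three things:
\begin{itemize}
\item it needs only $\mathbb E_P w<\infty$;
\item it covers signed kernels through the bounded variation of $\Phi_K$, which matters because order-$r$ kernels with $r>2$ are necessarily signed and so sit awkwardly with the paper's monotonicity condition;
\item being uniform in $h$, it absorbs the $N$-dependent bandwidth automatically.
\end{itemize}

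You also make explicit two points the paper leaves implicit. The identity $\bar F_h(t)=\int F_0^w(t-h_Nu)K(u)\,du$ requires $w=dQ_0/dP$, the likelihood ratio of the \emph{null} test distribution. The hypothesis $Nh_N/\log N\to\infty$ is never used, since it concerns uniform consistency of the density, not of the integrated CDF.

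Finally, the last step differs. Your inclusion $\{\hat p(Z)\le u\}\subseteq\{p^*(Z)\le u+\eta\}\cup\{\Delta_N>\eta\}$ does not even need $Z$ to be independent of the calibration data. The paper's conditioning plus dominated convergence does use independence, but in exchange gives the clean quantitative bound $u+\mathbb E\,\|\hat p-p^*\|_\infty$.
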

\paragraph{Remark.}
KDE-based $p$-values $\hat p_{\mathrm{KDE}}$ are \emph{not} conformal and \emph{not} finite-sample valid. Conformal $p$-values satisfy finite-sample \textit{marginal} super-uniformity over the joint randomness of the calibration set and test point, but not conditional on a fixed calibration set. WCS achieves finite-sample FDR control with weighted conformal $p$-values satisfying its leave-one-out structure---substituting KDE surrogates does not preserve this guarantee. We therefore treat WCS-on-surrogates as a heuristic and evaluate it empirically, using smoothing and bandwidth selection to stabilize tail estimation and reduce discretization-induced Type II errors.

\subsection{Bias--Variance--Resolution Trilemma}

We compare deterministic $\hat p^{\mathrm{disc}}$ \eqref{eq:weighted_ecdf_cons}, randomized $\hat p^{\mathrm{rand}}$ \eqref{eq:weighted_ecdf_rand}, and KDE surrogate $\hat p_{\mathrm{KDE}}$ \eqref{eq:kde_pvalue} $p$-value construction.

\begin{enumerate}
\item \textbf{Deterministic (discrete).} $\hat p^{\mathrm{disc}}$ is deterministic but has a floor
$\min \hat p^{\mathrm{disc}} = w_j/(\sum_{i=1}^N w_i + w_j)$ (attained when $s_j>\max_i s_i$), which can exceed the multiple-testing cutoff and cause power collapse.

\item \textbf{Randomized.} $\hat p^{\mathrm{rand}}$ is conditionally uniform on an interval of width
$\mathrm{width}_j=(w_j+\sum_{i=1}^N w_i\mathbb I(s_i=s_j))/(\sum_{i=1}^N w_i+w_j)$ (via $U_j\sim\mathrm{Unif}[0,1]$). This removes the floor (unbounded resolution) but introduces conditional variance $\mathrm{Var}(\hat p^{\mathrm{rand}}\mid\cdot)=\mathrm{width}_j^2/12$, which is large when the relative test mass is large.

\item \textbf{Continuous (KDE).} $\hat p_{\mathrm{KDE}}$ replaces $U_j$-randomization by deterministic smoothing of the weighted calibration score distribution, yielding continuous $p$-values (no discrete floor) without auxiliary Monte Carlo variability, at the cost of replacing finite-sample exactness by asymptotic approximation (Theorem~1).
\end{enumerate}

\subsection{Formalizing the Dilemma}
We now formalize the limitations of the two standard weighted conformal $p$-values.

\begin{proposition}[Resolution Collapse through Discreteness]
Consider the deterministic weighted conformal $p$-value $p_j^{\mathrm{disc}}$ (Eq.~\ref{eq:weighted_ecdf_cons}). Conditional on the realized calibration scores $\{s_i\}_{i=1}^N$ and weights $\{w_i\}_{i=1}^N$, the smallest attainable value of the random variable $p_j^{\mathrm{disc}}$ over all possible test scores $s_j$ occurs when $s_j>\max_i s_i$, and equals
\begin{equation}
\min_{s_j\in\mathbb R} p_j^{\mathrm{disc}}(s_j)
=
\frac{w_j}{W_{\mathrm{total}}}.
\end{equation}
\end{proposition}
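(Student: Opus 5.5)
We treat $p_j^{\mathrm{disc}}$ as a deterministic function of the test score $s_j$, with the calibration scores $\{s_i\}_{i=1}^N$ and the weights $\{w_i\}_{i=1}^N$, $w_j$ held fixed. We assume all weights are nonnegative and $W_{\mathrm{total}}=\sum_{i=1}^N w_i+w_j>0$. Under this conditioning, the denominator in Eq.~\eqref{eq:weighted_ecdf_cons} does not depend on $s_j$. Minimizing $p_j^{\mathrm{disc}}(s_j)$ is therefore the same as minimizing the numerator
\[
g(s_j)=\sum_{i=1}^N w_i\,\mathbb I(s_i\ge s_j)+w_j .
\]

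\textbf{Lower bound.} Every summand $w_i\,\mathbb I(s_i\ge s_j)$ is nonnegative. Hence $g(s_j)\ge w_j$ for every $s_j\in\mathbb R$, and so $p_j^{\mathrm{disc}}(s_j)\ge w_j/W_{\mathrm{total}}$.

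\textbf{Attainment.} Pick any $s_j>\max_i s_i$. Then $\mathbb I(s_i\ge s_j)=0$ for all $i$, so $g(s_j)=w_j$ and the lower bound holds with equality. Combining the two steps, the minimum over $s_j$ exists and equals $w_j/W_{\mathrm{total}}$, and it is attained on the interval $s_j>\max_i s_i$.

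\textbf{Monotonicity and uniqueness of the minimizing region.} The map $s_j\mapsto g(s_j)$ is a nonincreasing step function, because each indicator $\mathbb I(s_i\ge s_j)$ is nonincreasing in $s_j$. It drops at each calibration score and becomes constant at $w_j$ once $s_j$ exceeds $\max_i s_i$. This justifies saying the minimum ``occurs when $s_j>\max_i s_i$''.

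One boundary case needs care. At $s_j=\max_i s_i$ the indicator for the maximizing points equals one, so $g=w_j+\sum_{i:\,s_i=\max}w_i$. This is strictly above the floor when those weights are positive. If some of those maximizing calibration points have zero weight, the minimum can also be attained at smaller $s_j$.

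I expect this degenerate zero-weight situation to be the only point requiring attention. I would state that the value of the minimum is always $w_j/W_{\mathrm{total}}$ and that $s_j>\max_i s_i$ always attains it. Uniqueness of the minimizing region then holds whenever all $w_i>0$. Winsorization guarantees this in the paper's setting, since weights are clipped to the $[\gamma,1-\gamma]$ quantiles of the observed weights, and those are positive.
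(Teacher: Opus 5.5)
Your proof is correct and matches the reasoning the paper relies on. The paper gives no separate proof: it notes that every indicator term in the numerator of Eq.~\ref{eq:weighted_ecdf_cons} is nonnegative and vanishes once $s_j>\max_i s_i$, leaving $w_j/W_{\mathrm{total}}$. One small correction to your boundary discussion: the minimum extends to $s_j\le\max_i s_i$ only if \emph{all} calibration points tied at the maximum score have zero weight, not merely some of them.
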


\textbf{Implication.} If $w_j/W_{\mathrm{total}}$ exceeds the multiple-testing threshold even arbitrarily extreme test scores cannot be rejected using the deterministic weighted conformal $p$-value.

\begin{proposition}[Conditional Variance through Randomness]
Let $p_j^{\mathrm{rand}}$ be the randomized weighted conformal $p$-value (Eq.~\ref{eq:weighted_ecdf_rand}). Conditional on the realized calibration scores, weights and $s_j$, the randomness in $p_j^{\mathrm{rand}}$ comes from $U_j\sim\mathrm{Unif}(0,1)$ and yields an interval of width $\mathrm{width}_j$. Consequently,
\begin{equation}
\mathrm{Var}\!\left(p_j^{\mathrm{rand}}
  \mid \{s_i,w_i\}_{i=1}^N, s_j\right)
=
\frac{1}{12}
\left(
\frac{w_j + W_{=}(s_j)}
     {W_{\mathrm{total}}}
\right)^2.
\end{equation}
\end{proposition}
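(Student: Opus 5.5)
The plan is to condition on the realized calibration scores $\{s_i\}_{i=1}^N$, the weights $\{w_i\}_{i=1}^N$, the test weight $w_j$, and the test score $s_j$. We then show that under this conditioning $p_j^{\mathrm{rand}}$ is an affine function of the single random variable $U_j$. Write $W_{\mathrm{cal}}=\sum_{i=1}^N w_i$ and $W_{\mathrm{total}}=W_{\mathrm{cal}}+w_j$ as in Section~\ref{sec:dilemma}, and set
\[
A_j=\frac{\sum_{i=1}^N w_i\,\mathbb I(s_i>s_j)}{W_{\mathrm{total}}},\qquad
B_j=\frac{w_j+W_{=}(s_j)}{W_{\mathrm{total}}}=\mathrm{width}_j .
\]
Both $A_j$ and $B_j$ are measurable functions of the conditioning variables. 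Rearranging Eq.~\eqref{eq:weighted_ecdf_rand} gives exactly $p_j^{\mathrm{rand}}=A_j+U_j B_j$.

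The one point that needs care is that $U_j\sim\mathrm{Unif}[0,1]$ is drawn independently of the data, as in the construction following \citet{Jin2025}. Consequently its conditional law given $\{s_i,w_i\}_{i=1}^N, s_j$ (and $w_j$) is still $\mathrm{Unif}[0,1]$. This yields the first claim: conditionally, $p_j^{\mathrm{rand}}$ is uniform on $[A_j, A_j+B_j]$, an interval of width $\mathrm{width}_j$. In the degenerate case $B_j=0$ the interval collapses to a point and the formula still holds. However, $w_j>0$ forces $B_j>0$ whenever weights are positive.

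For the variance, we use that conditional variance is invariant to the additive constant $A_j$ and scales quadratically with the multiplicative constant $B_j$. This gives $\mathrm{Var}(p_j^{\mathrm{rand}}\mid\cdot)=B_j^2\,\mathrm{Var}(U_j)$. Substituting $\mathrm{Var}(U_j)=\int_0^1 u^2\,du-(1/2)^2=1/3-1/4=1/12$ then gives $\frac{1}{12}\bigl((w_j+W_{=}(s_j))/W_{\mathrm{total}}\bigr)^2$.

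There is no real obstacle here. The only step needing explicit justification is the independence of $U_j$ from the conditioning variables. As a sanity check, we specialize to $s_j>\max_i s_i$ with no ties. Then $A_j=0$ and $B_j=w_j/W_{\mathrm{total}}$, which recovers the $\mathrm{Unif}[0,w_j/W_{\mathrm{total}}]$ description from Failure Mode B.
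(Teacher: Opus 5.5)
Your proposal is correct and follows the same route as the paper: you write $p_j^{\mathrm{rand}}$ as a deterministic left endpoint plus $U_j\cdot\mathrm{width}_j$ and then use $\mathrm{Var}(U_j)=1/12$. Your two additions are worthwhile clarifications that the paper's one-line proof leaves implicit: you add $w_j$ to the conditioning set, which matters because $w_j$ depends on $z_j$ and is not determined by $s_j$ alone, and you state explicitly that $U_j$ is independent of the data.
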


\begin{proof}
Conditional on $\{s_i,w_i\}_{i=1}^N$ and $s_j$, the randomized $p$-value has the form $p_j^{\mathrm{rand}} = a_j + U_j\cdot \mathrm{width}_j$
for some deterministic $a_j$ (the left endpoint of the interval). Since $\mathrm{Var}(U_j)=1/12$, the stated variance follows.
\end{proof}

\textbf{Implication:} The variance of the $p$-value scales with the test weight fraction. If $N_{\text{eff}}$ is low ($w_j$ is large), the $p$-value becomes a noisy estimate. This noise acts as a regularizer, potentially masking anomalies and reducing statistical power.

\section{Evaluation}\label{sec:evaluation}

We compare our proposed method against standard conformal procedures by a two-phase experimental protocol. All experiments are conducted on standard anomaly detection benchmark datasets \cite{Han2022} (see Table~\ref{tab:data}). The anomaly rate of each test set is controlled at $\pi \approx 0.05$. All features are z-score standardized, with parameters fitted on the training splits for each randomized trial.

\subsection{Phase 1: Model Selection}
CAD requires a suitable scoring function to produce informative $p$-values. To this end, we employ model selection for each experimental trial (so \textit{per random seed}).

For each of the $N_{\text{seeds}}=20$ trials:
\begin{enumerate}
    \item We randomly partition the available data into a \textbf{Training Set} ($D_{\text{train}}$), a \textbf{Validation Set} ($D_{\text{val}}$), and a \textbf{Test Set} ($D_{\text{test}}$).
    \item We train candidate detectors (see Table~\ref{tab:models}) on $D_{\text{train}}$ with default hyperparameters, using \textit{Jackknife+-after-Bootstrap} calibration \cite{Kim2020,Hennhoefer2024} that integrates training and calibration by bootstrap sampling to make more efficient use of the data without requiring a disjoint $D_{\text{calib}}$ via splits.
    \item We evaluate these candidates on $D_{\text{val}}$ (containing both inliers and anomalies). A model is then selected based on a lexicographical hierarchy: maximizing PR-AUC, then ROC-AUC, then minimizing Brier Score.
    \item The selected model is fixed for this trial for \textit{all} evaluated methods. Crucially, $D_{\text{val}}$ is then discarded and \textit{not} used for further evaluation to prevent \textit{data leakage}.
\end{enumerate}

\subsection{Phase 2: Model Inference}

Using the fixed seed--model pairs from Phase~1, we compare the proposed method with four standard approaches (unweighted and weighted, each in deterministic and randomized variants). 

Specifically, we consider: (i) the deterministic EDF-based procedure, which isolates the effect of resolution collapse, (ii) the randomized variant, which isolates the effect of variance inflation and (iii) our proposed continuous relaxation leveraging both unweighted and weighted KDE.

Weighted procedures employ WCS with \textit{homogeneous} pruning while unweighted procedures employ the BH method. All weights are estimated by a probabilistic Random Forest classifier\footnote{Estimates are stabilized via bagging; see Appendix~\ref{sec:stable_weight_est}.}. The nominal FDR is controlled at $\alpha = 0.1$.

\subsection{Evaluation Metrics}
\label{sec:metrics}
For a test batch, let $H_{\text{0}}$ denote the set of true inliers and $H_{\text{1}}$ the set of true anomalies. Let $\mathcal{R}$ be the set of indices rejected. We evaluate performance using two metrics:

\paragraph{False Discovery Proportion (FDP).}
The empirical fraction of false alarms among the reported discoveries:
\begin{equation}
    \mathrm{FDP} = \frac{|\{i \in \mathcal{R} : i \in \mathcal{H}_0\}|}{\max(1, |\mathcal{R}|)}.
\end{equation}
We report the marginal FDR, estimated by averaging the FDP over all $N_{\text{seeds}}$ trials, as
$\widehat{\mathrm{FDR}} \approx \mathbb{E}[\mathrm{FDP}]$. A method is considered \textit{valid} if $\widehat{\mathrm{FDR}} \le \alpha + t_{0.995} \times \frac{\sigma(\widehat{\mathrm{FDR}})}{\sqrt{20}}$.

\paragraph{Statistical Power.}
The proportion of true anomalies correctly identified:
\begin{equation}
    \mathrm{Power} = \frac{|\{i \in \mathcal{R} : i \in \mathcal{H}_1\}|}{|\mathcal{H}_1|}.
\end{equation}
We report the mean $\mathrm{Power}$ over the same $N_{\text{seeds}}$ trials.

\begin{table*}[tp]
\centering
\caption{\textbf{Performance of conformal inference strategies across anomaly detection benchmarks.}
All weighted methods employ WCS with \textit{\ul{homogeneous} pruning} to guarantee finite-sample FDR control.
Values represent the mean $\pm$ standard deviation of the empirical marginal FDR and statistical power aggregated over 20 independent trials with randomized splits.
We compare deterministic and randomized baselines against the proposed continuous (KDE-based) approach.
The calibration and test set sizes are denoted by $n_{\text{train}}$ and $n_{\text{test}}$. Underlined values indicate validity violations (see Section~\ref{sec:metrics}).
}
\label{tab:results_table}
\renewcommand{\arraystretch}{1.15}
\setlength{\tabcolsep}{4.5pt}
\begin{tabularx}{\textwidth}{l l c c c c p{0.5em} C C}
\toprule
\textbf{Dataset} & \textbf{Method}
& \multicolumn{2}{c}{\textbf{Deterministic}}
& \multicolumn{2}{c}{\textbf{Randomized}}
&
& $n_{\text{train}}$
& $n_{\text{test}}$\\
\cmidrule(lr){3-4}\cmidrule(lr){5-6}
& \textit{Homogeneous} & \textbf{FDR} & \textbf{Power} & \textbf{FDR} & \textbf{Power} & & \\
\midrule

\multirow{4}{*}{WBC}
& EDF                   & $0.000\pm0.000$ & $0.000\pm0.000$ & $0.062\pm0.129$ & $0.200\pm0.332$ & & \multirow{4}{*}{106} & \multirow{4}{*}{56} \\
& Weighted EDF          & $0.000\pm0.000$ & $0.000\pm0.000$ & $0.075\pm0.245$ & $0.100\pm0.244$ & &  &   \\
& \textbf{KDE}          & $0.095\pm0.152$ & $0.500\pm0.351$ & \multicolumn{2}{c}{\textemdash} & &  &   \\
& \textbf{Weighted KDE} & $0.078\pm0.142$ & $0.417\pm0.373$ & \multicolumn{2}{c}{\textemdash} & &  &   \\
\cmidrule(lr){1-9}

\multirow{4}{*}{Ionosphere}
& EDF                   & $0.000\pm0.000$ & $0.000\pm0.000$ & $0.076\pm0.199$ & $0.150\pm0.221$ & & \multirow{4}{*}{112} & \multirow{4}{*}{88} \\
& Weighted EDF          & $0.000\pm0.000$ & $0.000\pm0.000$ & $0.042\pm0.131$ & $0.075\pm0.143$ & &  &   \\
& \textbf{KDE}          & $0.081\pm0.150$ & $0.300\pm0.434$ & \multicolumn{2}{c}{\textemdash} & & &   \\
& \textbf{Weighted KDE} & $0.047\pm0.119$ & $0.138\pm0.339$ & \multicolumn{2}{c}{\textemdash} & &  &   \\
\cmidrule(lr){1-9}

\multirow{4}{*}{WDBC}
& EDF                   & $0.098\pm0.165$ & $0.280\pm0.442$ & \ul{$0.166\pm0.197$} & $0.440\pm0.452$ & & \multirow{4}{*}{178} & \multirow{4}{*}{92} \\
& Weighted EDF          & $0.000\pm0.000$ & $0.000\pm0.000$ & $0.108\pm0.197$ & $0.090\pm0.152$ &  &  &  \\
& \textbf{KDE}          & $0.086\pm0.135$ & $0.390\pm0.402$ & \multicolumn{2}{c}{\textemdash} &  &  &  \\
& \textbf{Weighted KDE} & $0.095\pm0.164$ & $0.350\pm0.383$ & \multicolumn{2}{c}{\textemdash} &  &  &  \\
\cmidrule(lr){1-9}

\multirow{4}{*}{\shortstack{Breast Cancer\\(Wisconsin)}}
& EDF                   & $0.000\pm0.000$ & $0.000\pm0.000$ & $0.000\pm0.000$ & $0.094\pm0.145$ & & \multirow{4}{*}{222} & \multirow{4}{*}{171} \\
& Weighted EDF          & $0.000\pm0.000$ & $0.000\pm0.000$ & $0.000\pm0.000$ & $0.044\pm0.084$ &  &  &  \\
& \textbf{KDE}          & $0.046\pm0.074$ & $0.350\pm0.296$ & \multicolumn{2}{c}{\textemdash} &  &  &  \\
& \textbf{Weighted KDE} & $0.027\pm0.066$ & $0.267\pm0.218$ & \multicolumn{2}{c}{\textemdash} &  &  &  \\
\cmidrule(lr){1-9}

\multirow{4}{*}{Vowels}
& EDF                   & $0.000\pm0.000$ & $0.000\pm0.000$ & $0.017\pm0.074$ & $0.067\pm0.071$ & & \multirow{4}{*}{703} & \multirow{4}{*}{364} \\
& Weighted EDF          & $0.000\pm0.000$ & $0.000\pm0.000$ & $0.000\pm0.000$ & $0.014\pm0.040$ &  &  &  \\
& \textbf{KDE}          & $0.035\pm0.109$ & $0.122\pm0.082$ & \multicolumn{2}{c}{\textemdash} &  &  &  \\
& \textbf{Weighted KDE} & $0.035\pm0.109$ & $0.117\pm0.082$ & \multicolumn{2}{c}{\textemdash} &  &  &  \\
\cmidrule(lr){1-9}

\multirow{4}{*}{Cardio}
& EDF                   & $0.034\pm0.091$ & $0.039\pm0.097$ & $0.119\pm0.250$ & $0.067\pm0.096$ & & \multirow{4}{*}{827} & \multirow{4}{*}{458} \\
& Weighted EDF          & $0.018\pm0.081$ & $0.015\pm0.068$ & $0.031\pm0.072$ & $0.030\pm0.072$ &  & &  \\
& \textbf{KDE}          & $0.066\pm0.149$ & $0.089\pm0.079$ & \multicolumn{2}{c}{\textemdash} &  &  & \\
& \textbf{Weighted KDE} & $0.035\pm0.098$ & $0.087\pm0.080$ & \multicolumn{2}{c}{\textemdash} &  &  &  \\
\cmidrule(lr){1-9}

\multirow{4}{*}{Musk}
& EDF                   & $0.102\pm0.060$ & $1.000\pm0.000$  & \ul{$0.105\pm0.060$} & $1.000\pm0.000$ & & \multirow{4}{*}{1,482} & \multirow{4}{*}{766} \\
& Weighted EDF          & $0.096\pm0.056$ & $1.000\pm0.000$   & $0.103\pm0.056$ & $1.000\pm0.000$ &  &  &  \\
& \textbf{KDE}          & $0.084\pm0.060$ & $1.000\pm0.000$  & \multicolumn{2}{c}{\textemdash} &  &  &  \\
& \textbf{Weighted KDE} & $0.082\pm0.060$ & $1.000\pm0.000$  & \multicolumn{2}{c}{\textemdash} &  &  &  \\
\cmidrule(lr){1-9}

\multirow{4}{*}{Satellite}
& EDF                   & \ul{$0.109\pm0.107$} & $0.259\pm0.082$ & \ul{$0.108\pm0.105$} & $0.267\pm0.085$ & & \multirow{4}{*}{2,199} & \multirow{4}{*}{1,609} \\
& Weighted EDF          & $0.104\pm0.099$ & $0.249\pm0.085$ & \ul{$0.107\pm0.103$} & $0.259\pm0.083$ &  &  &  \\
& \textbf{KDE}          & \ul{$0.117\pm0.103$} & $0.291\pm0.087$ & \multicolumn{2}{c}{\textemdash} &  &  &  \\
& \textbf{Weighted KDE} & \ul{$0.112\pm0.098$} & $0.284\pm0.070$ & \multicolumn{2}{c}{\textemdash} &  &  &  \\
\cmidrule(lr){1-9}

\multirow{4}{*}{Mammography}
& EDF                   & $0.019\pm0.037$ & $0.052\pm0.057$ & $0.026\pm0.040$ & $0.069\pm0.053$ & & \multirow{4}{*}{5,461} & \multirow{4}{*}{2,796} \\
& Weighted EDF          & $0.017\pm0.037$ & $0.038\pm0.049$ & $0.017\pm0.037$ & $0.043\pm0.046$ &  &  &  \\
& \textbf{KDE}          & $0.045\pm0.060$ & $0.085\pm0.050$ & \multicolumn{2}{c}{\textemdash} &  &  &  \\
& \textbf{Weighted KDE} & $0.037\pm0.051$ & $0.077\pm0.048$ & \multicolumn{2}{c}{\textemdash} &  &  & \\
\bottomrule
\end{tabularx}
\end{table*}

\begin{figure*}[t]
  \centering
  \begin{tikzpicture}

\definecolor{colWEDF}{HTML}{72B7B2}
\definecolor{colWKDE}{HTML}{B279A2}

\definecolor{colEDFdet}{HTML}{E4F1E1}  % teal
\definecolor{colEDFhom}{HTML}{63A6a0}  % original teal (center)
\definecolor{colEDFhet}{HTML}{0d585f}  % greener teal

% Styles (define outside axis = safest)
\pgfplotsset{
  edfdet/.style={
    draw=black,
    line width=0.6pt,
    fill=colEDFdet,
    fill opacity=1,
  },
  edfhomog/.style={
    draw=black,
    line width=0.6pt,
    fill=colEDFhom,
    fill opacity=1,
  },
  edfhet/.style={
    draw=black,
    line width=0.6pt,
    fill=colEDFhet,
    fill opacity=1,
  },
  kdestd/.style={
    draw=black,
    line width=0.6pt,
    fill=colWKDE,
    fill opacity=1,
  }
}

\begin{axis}[
    width=\textwidth,
    height=0.32\textwidth,
    ybar,
    bar width=6.6pt,
    ymin=0, ymax=0.52,
    ymajorgrids=true,
    tick align=outside,
    ylabel={Statistical Power},
    xtick pos=left,   % x ticks only at the bottom
    ytick pos=left,   % y ticks only at the left
    grid style={line width=0.5pt, draw=black!15},
    axis line style={line width=0.8pt},
    tick style={line width=0.8pt},
    label style={font=\normalsize},
    tick label style={font=\small},
    x tick label style={align=center},
    symbolic x coords={
        WBC, Ionosphere, WDBC, BreastCa,
        Vowels, Cardio, Satellite, Mammogr.
    },
    xtick=data,
    enlarge x limits=0.07,
    clip=true,
    legend style={
        at={(0.5,0.85)},
        anchor=south,
        draw=black,
        fill=white,
        fill opacity=0.95,
        text opacity=1,
        rounded corners=1pt,
        inner sep=2.5pt,
        font=\scriptsize,
        /tikz/every even column/.append style={column sep=6pt},
    },
    legend columns=4,
]

% W. EDF (det.)
\addplot+[
  edfdet,
  error bars/.cd,
    y dir=both,
    y explicit,
    error bar style={line width=0.8pt, black},
    error mark options={rotate=90, mark size=2.2pt, line width=0.8pt, black},
] coordinates {
    (WBC,0.017) +- (0,0.0166)     
    (Ionosphere,0.025) +- (0,0.0172)
    (WDBC,0.010) +- (0,0.0100)
    (BreastCa,0.006) +- (0,0.0056)
    (Vowels,0.000) +- (0,0.0000)
    (Cardio,0.024) +- (0,0.0153)
    (Satellite,0.233) +- (0,0.0253)
    (Mammogr.,0.039) +- (0,0.0108)
};
\addlegendentry{W.\ EDF (deterministic)}

% W. EDF (homog.)
\addplot+[
  edfhomog,
  error bars/.cd,
    y dir=both,
    y explicit,
    error bar style={line width=0.8pt, black},
    error mark options={rotate=90, mark size=2.2pt, line width=0.8pt, black},
] coordinates {
    (WBC,0.1) +- (0,0.0546)
    (Ionosphere,0.075) +- (0,0.0329)
    (WDBC,0.090) +- (0,0.0340)
    (BreastCa,0.044) +- (0,0.0187)
    (Vowels,0.014) +- (0,0.0089)
    (Cardio,0.030) +- (0,0.0161)
    (Satellite,0.259) +- (0,0.0186)
    (Mammogr.,0.043) +- (0,0.0103)
};
\addlegendentry{W.\ EDF (homogeneous)}

% W. EDF (het.)
\addplot+[
  edfhet,
  error bars/.cd,
    y dir=both,
    y explicit,
    error bar style={line width=0.8pt, black},
    error mark options={rotate=90, mark size=2.2pt, line width=0.8pt, black},
] coordinates {
    (WBC,0.083) +- (0,0.0410)
    (Ionosphere,0.075) +- (0,0.0319)
    (WDBC,0.100) +- (0,0.0271)
    (BreastCa,0.044) +- (0,0.0187)
    (Vowels,0.011) +- (0,0.0065)
    (Cardio,0.028) +- (0,0.0155)
    (Satellite,0.261) +- (0,0.0182)
    (Mammogr.,0.045) +- (0,0.0101)
};
\addlegendentry{W.\ EDF (heterogeneous)}

\addplot+[
  kdestd,
  error bars/.cd,
    y dir=both,
    y explicit,
    error bar style={line width=0.8pt, black},
    error mark options={rotate=90, mark size=2.2pt, line width=0.8pt, black},
] coordinates {
    (WBC,0.417) +- (0,0.0833)
    (Ionosphere,0.138) +- (0,0.0758)
    (WDBC,0.350) +- (0,0.0857)
    (BreastCa,0.267) +- (0,0.0486)
    (Vowels,0.117) +- (0,0.0184)
    (Cardio,0.087) +- (0,0.0179)
    (Satellite,0.284) +- (0,0.0157)
    (Mammogr.,0.077) +- (0,0.0108)
};
\addlegendentry{W.\ KDE}

\end{axis}

\end{tikzpicture}
  \caption{\textbf{Statistical power of weighted and randomized conformal methods with WCS pruning strategies, including the weighted KDE-based approach.} Labels refer to the WCS pruning method. Results for \textit{Musk} are omitted due to ceiling performance across all strategies. Error bars denote mean $\pm$ standard error over 20 randomized trials.}

\label{fig:results_plot}
\end{figure*}
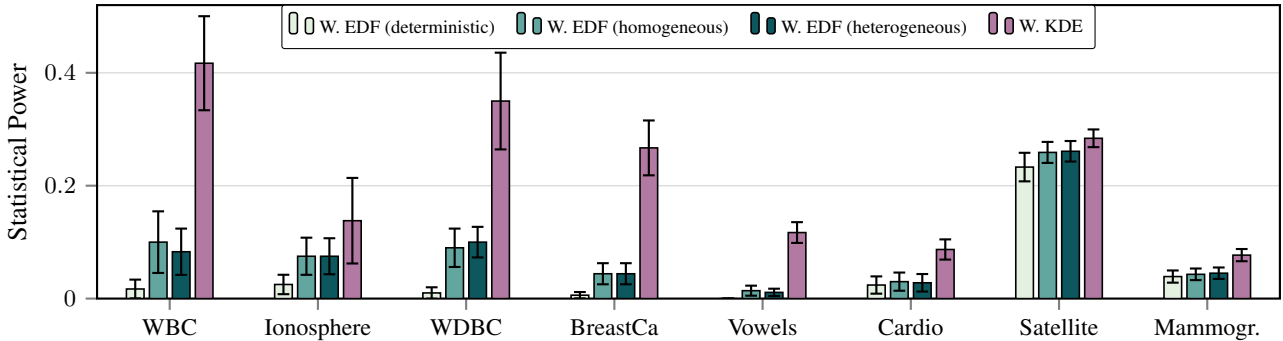

\section{Results}\label{sec:results}

Table~\ref{tab:results_table} and Figure~\ref{fig:results_plot} show the evaluation results based on the described protocol. The results provide empirical validation of the theoretical dilemmas posed in Section~\ref{sec:theory}, demonstrating that continuous inference relaxation is often a prerequisite for weighted CAD in low-data regimes.

\subsection{The Inability to reject}
Smaller datasets ($N \ll 1000$) illustrate the inflation of the minimum attainable $p$-values of discrete estimators.
\begin{itemize}
    \item \textbf{Failure through Discreteness:} Both standard discrete methods yield few to no discoveries. The sample size is insufficient to generate a $p$-value below the detection threshold. Randomization recovers some of the lost ability to make discoveries.
    \item \textbf{Recovery through Continuity:} The KDE approach successfully extrapolates the tail behaviour, recovering significant statistical power while maintaining valid marginal FDR control.
\end{itemize}

Larger calibration sets ($N>1000$) reduce kernel-induced bias, so all methods converge in performance as $N \to \infty$.

\subsection{The Cost of Importance Weighting}
The dataset \textit{WDBC} clearly demonstrates how covariate shift adaptation via importance weighting affects $N_{\text{eff}}$.

\begin{itemize}
    \item \textbf{Uniform Weighting:} For both standard (unweighted) conformal methods $D_{\text{calib}}$ is sufficient to achieve discoveries. However, note the validity violation of the unweighted, randomized approach in Table~\ref{tab:results_table}.
    \item \textbf{Importance Weighting:} After adaption to covariate shift, higher weighted calibration instances dominate the mass, decreasing $N_{\text{eff}}$ and leading to a severe loss in statistical power for the EDF-based weighted methods.
\end{itemize}

The weighted KDE-based approach maintains its statistical power by decoupling its ability to reject from $N_{\text{eff}}$.

\subsection{FDR Control and Validity}
With one mild violation, the KDE-based approaches maintain valid marginal FDR control. In our experiments, the asymptotic approximation underlying the KDE does not systematically affect error control. Figure~\ref{fig:results_plot} illustrates the impact of the pruning method on the randomized variants, which are less powerful than the KDE-based approach.

\section{Conclusion}\label{sec:conclusion}

We formalized the \textit{Resolution--Variance Dilemma} in weighted CAD. As importance weights of calibration and test instances localize under covariate shift adaption, the effective sample size of the calibration set decreases, trapping inference in low-data regimes between discrete EDFs that become incapable of rejection in multiple testing (\textit{resolution collapse}) and a randomized variant that regains continuity by noise injection, degrading the signal (\textit{variance inflation}).

To break this trade-off, we proposed a continuous weighted inference scheme via weighted KDE. Empirically, it (1) restores detection in data-scarce regimes where discrete baselines yield zero discoveries, (2) improves efficiency over stochastic smoothing by replacing it with deterministic smoothing, and (3) matches discrete performance as calibration data grows, consistent with asymptotic convergence.

Conceptually, our framework is best viewed as a pragmatic extension of rigorous CAD: finite-sample exactness is theoretically strongest, but becomes operationally vacuous when discreteness prevents rejections. By accepting an \textit{asymptotic} validity target, continuous smoothing decouples rejection ability from calibration set size and extends CAD to settings where finite-sample exactness guarantees under (weighted) exchangeability yield no utility.

\begin{acknowledgements} % will be removed in pdf for initial submission,
						 % (without ‘accepted’ option in \documentclass)
                         % so you can already fill it to test with the
                         % ‘accepted’ class option
    This work was conducted as part of the research project \textit{Biflex Industrie} (grant number 01MV23020A), funded by the German Federal Ministry for Economic Affairs and Climate Action (BMWK).
\end{acknowledgements}

% References
\bibliography{references}

\newpage
\onecolumn

\title{Between Resolution Collapse and Variance Inflation:\\
Weighted Conformal Anomaly Detection in Low-Data Regimes\\
(Supplementary Material)}
\maketitle

\appendix
\section{Implementation and Reproducibility Details}

All experiments reported in this paper can be reproduced end-to-end from code provided on \href{https://github.com/OliverHennhoefer/wkde-cad}{\texttt{https://github.com/OliverHennhoefer/wkde-cad}}. This environment fully specifies the Python version, and all required dependencies. Executing the provided configuration files and experiment scripts reproduces the complete experimental pipeline, including data preprocessing, model selection, model training and evaluation procedures as presented in the main text and supplementary material.

All conformal methods as well as the method proposed in this work are implemented in the publicly available Python package \href{https://github.com/OliverHennhoefer/nonconform}{\texttt{nonconform}}, available on \texttt{PyPI}. The implementations are compatible with standard \texttt{scikit-learn} interfaces, as well as \texttt{pyod} and custom detector classes, and operate on data represented as either \texttt{numpy} arrays or \texttt{pandas} data frames. 

The package itself focuses exclusively on providing reusable method implementations for personal use. It does not include the experimental protocol, benchmarking framework, or evaluation pipeline used to produce the results in this paper. These components are provided separately within the reproducibility environment described above. This separation ensures that the methodological contributions can be applied independently to user-provided datasets while maintaining full reproducibility of the reported empirical results.

\newpage

\section{Evaluation}

\begin{center}
\captionof{table}{\textbf{Overview of the models used for evaluation, their abbreviations, categories, and references.}}
\label{tab:models}
\begin{tabularx}{\textwidth}{@{}l l X@{}}
\toprule
\textbf{Model} & \textbf{Category} & \textbf{Reference} \\
\midrule
Isolation Forest (\textbf{IForest}) & Tree-based & \cite{Liu2008} \\
Lightweight Online Detector of Anomalies (\textbf{LODA}) & Projection-based & \cite{Pevny2016} \\
Isolation Nearest Neighbor Ensemble (\textbf{INNE}) & Neighbor-based & \cite{Bandaragoda2018} \\
Histogram-Based Outlier Score (\textbf{HBOS}) & Density/Distance-based & \cite{Goldstein2012} \\
Copula-Based OD (\textbf{COPOD}) & Copula-based & \cite{Li2020} \\
Empirical Cumulative Distribution OD (\textbf{ECOD}) & Distribution-based & \cite{Li2023} \\
\bottomrule
\end{tabularx}
\end{center}

\begin{center}
\captionof{table}{\textbf{Overview of datasets used for evaluation, their categories, and references.}}
\label{tab:data}
\begin{tabularx}{\textwidth}{XXX}
\toprule
\textbf{Dataset} & \textbf{Category} & \textbf{Reference} \\
\midrule
WBC & Healthcare & \cite{Mangasarian1995} \\
Ionosphere & Oryctognosy & \cite{Sigillito1989} \\
WDBC & Healthcare & \cite{Mangasarian1995} \\
Breast Cancer & Healthcare & \cite{Wolberg1990} \\
Vowels & Linguistics & \cite{Kudo1999} \\
Cardio & Healthcare & \cite{Campos2000} \\
Musk & Chemistry & \cite{Dietterich1993} \\
Satellite & Astronautics & \cite{Rayana2016} \\
Mammography & Healthcare & \cite{Woods1993} \\
\bottomrule
\end{tabularx}
\end{center}

% Preamble (if needed):
% \usepackage{booktabs}
% \usepackage{tabularx}
% \usepackage{ragged2e}
% \usepackage{multirow}
% \newcolumntype{Y}{>{\RaggedRight\arraybackslash}X}

% -------------------- Table 1 (4 datasets) --------------------
\begin{table*}[t]
\centering
\caption{\textbf{Per-dataset detector comparison (mean$\pm$std over 20 seeds).}
Results are aggregated across independent trials with per-seed model selection.
\textbf{Wins} counts how often (out of 20) a detector is selected for the dataset according to the lexicographic rule PR-AUC $\uparrow$, ROC-AUC $\uparrow$, Brier $\downarrow$.}
\label{tab:benchmark_results_part1}
\setlength{\tabcolsep}{5pt}
\renewcommand{\arraystretch}{1.10}

\begin{tabularx}{\textwidth}{l l CCC l}
\toprule
\textbf{Dataset} & \textbf{Model} & \textbf{PR-AUC (mean $\pm$ std)} & \textbf{ROC-AUC (mean $\pm$ std)} & \textbf{Brier (mean $\pm$ std)} & \textbf{Wins} \\
\midrule
\multirow{6}{*}{WBC}
& \textbf{IForest} & 0.987 $\pm$ 0.009 & 0.993 $\pm$ 0.004 & 0.051 $\pm$ 0.005 & 12/20 \\
& \textbf{COPOD}   & 0.984 $\pm$ 0.008 & 0.992 $\pm$ 0.004 & 0.048 $\pm$ 0.011 & 3/20 \\
& \textbf{ECOD}    & 0.984 $\pm$ 0.008 & 0.992 $\pm$ 0.004 & 0.049 $\pm$ 0.012 & 2/20 \\
& \textbf{HBOS}    & 0.972 $\pm$ 0.014 & 0.988 $\pm$ 0.006 & 0.050 $\pm$ 0.006 & 3/20 \\
& LODA    & 0.913 $\pm$ 0.029 & 0.962 $\pm$ 0.011 & 0.071 $\pm$ 0.009 & 0/20 \\
& INNE    & 0.908 $\pm$ 0.042 & 0.954 $\pm$ 0.019 & 0.219 $\pm$ 0.011 & 0/20 \\
\midrule
\multirow{6}{*}{Ionosphere}
& \textbf{INNE}    & 0.952 $\pm$ 0.020 & 0.960 $\pm$ 0.017 & 0.214 $\pm$ 0.018 & 20/20 \\
& IForest & 0.859 $\pm$ 0.034 & 0.898 $\pm$ 0.027 & 0.142 $\pm$ 0.012 & 0/20 \\
& ECOD    & 0.775 $\pm$ 0.032 & 0.825 $\pm$ 0.028 & 0.158 $\pm$ 0.011 & 0/20 \\
& LODA    & 0.757 $\pm$ 0.037 & 0.841 $\pm$ 0.022 & 0.166 $\pm$ 0.010 & 0/20 \\
& COPOD   & 0.756 $\pm$ 0.033 & 0.833 $\pm$ 0.024 & 0.162 $\pm$ 0.013 & 0/20 \\
& HBOS    & 0.534 $\pm$ 0.051 & 0.685 $\pm$ 0.045 & 0.271 $\pm$ 0.019 & 0/20 \\
\midrule
\multirow{6}{*}{WDBC}
& \textbf{COPOD}   & 0.985 $\pm$ 0.012 & 0.996 $\pm$ 0.003 & 0.053 $\pm$ 0.004 & 11/20 \\
& \textbf{INNE}    & 0.976 $\pm$ 0.019 & 0.994 $\pm$ 0.005 & 0.223 $\pm$ 0.012 & 8/20 \\
& \textbf{IForest} & 0.973 $\pm$ 0.016 & 0.992 $\pm$ 0.004 & 0.070 $\pm$ 0.008 & 1/20 \\
& HBOS    & 0.959 $\pm$ 0.024 & 0.988 $\pm$ 0.006 & 0.090 $\pm$ 0.009 & 0/20 \\
& ECOD    & 0.922 $\pm$ 0.027 & 0.975 $\pm$ 0.008 & 0.090 $\pm$ 0.007 & 0/20 \\
& LODA    & 0.898 $\pm$ 0.029 & 0.974 $\pm$ 0.008 & 0.066 $\pm$ 0.009 & 0/20 \\
\midrule
\multirow{6}{*}{\shortstack{Breast Cancer\\(Wisconsin)}}
& \textbf{IForest} & 0.990 $\pm$ 0.004 & 0.995 $\pm$ 0.002 & 0.041 $\pm$ 0.004 & 13/20 \\
& \textbf{COPOD}   & 0.987 $\pm$ 0.004 & 0.994 $\pm$ 0.002 & 0.047 $\pm$ 0.003 & 5/20 \\
& \textbf{ECOD}    & 0.987 $\pm$ 0.004 & 0.994 $\pm$ 0.002 & 0.049 $\pm$ 0.003 & 2/20 \\
& HBOS    & 0.978 $\pm$ 0.008 & 0.991 $\pm$ 0.003 & 0.043 $\pm$ 0.003 & 0/20 \\
& LODA    & 0.950 $\pm$ 0.016 & 0.983 $\pm$ 0.005 & 0.059 $\pm$ 0.006 & 0/20 \\
& INNE    & 0.939 $\pm$ 0.017 & 0.974 $\pm$ 0.006 & 0.237 $\pm$ 0.010 & 0/20 \\
\bottomrule
\end{tabularx}
\end{table*}

% -------------------- Table 2 (5 datasets) --------------------
\begin{table*}[t]
\centering
\caption{\textbf{Per-dataset detector comparison (mean$\pm$std over 20 seeds).}
Results are aggregated across independent trials with per-seed model selection.
\textbf{Wins} counts how often (out of 20) a detector is selected for the dataset according to the lexicographic rule PR-AUC $\uparrow$, ROC-AUC $\uparrow$, Brier $\downarrow$.}
\label{tab:benchmark_results_part2}
\setlength{\tabcolsep}{5pt}
\renewcommand{\arraystretch}{1.10}

\begin{tabularx}{\textwidth}{l l CCC l}
\toprule
\textbf{Dataset} & \textbf{Model} & \textbf{PR-AUC (mean $\pm$ std)} & \textbf{ROC-AUC (mean $\pm$ std)} & \textbf{Brier (mean $\pm$ std)} & \textbf{Wins} \\
\midrule
\multirow{6}{*}{Vowels}
& \textbf{INNE}    & 0.594 $\pm$ 0.092 & 0.913 $\pm$ 0.028 & 0.176 $\pm$ 0.012 & 20/20 \\
& IForest & 0.312 $\pm$ 0.087 & 0.783 $\pm$ 0.049 & 0.188 $\pm$ 0.013 & 0/20 \\
& LODA    & 0.243 $\pm$ 0.065 & 0.696 $\pm$ 0.057 & 0.204 $\pm$ 0.012 & 0/20 \\
& HBOS    & 0.240 $\pm$ 0.076 & 0.701 $\pm$ 0.053 & 0.236 $\pm$ 0.015 & 0/20 \\
& ECOD    & 0.227 $\pm$ 0.070 & 0.628 $\pm$ 0.066 & 0.228 $\pm$ 0.017 & 0/20 \\
& COPOD   & 0.114 $\pm$ 0.028 & 0.526 $\pm$ 0.061 & 0.221 $\pm$ 0.010 & 0/20 \\
\midrule
\multirow{6}{*}{Cardio}
& \textbf{ECOD}    & 0.752 $\pm$ 0.039 & 0.962 $\pm$ 0.007 & 0.089 $\pm$ 0.006 & 18/20 \\
& \textbf{INNE}    & 0.707 $\pm$ 0.038 & 0.955 $\pm$ 0.009 & 0.249 $\pm$ 0.006 & 2/20 \\
& IForest & 0.692 $\pm$ 0.044 & 0.947 $\pm$ 0.011 & 0.097 $\pm$ 0.008 & 0/20 \\
& COPOD   & 0.654 $\pm$ 0.047 & 0.939 $\pm$ 0.010 & 0.090 $\pm$ 0.012 & 0/20 \\
& LODA    & 0.620 $\pm$ 0.059 & 0.920 $\pm$ 0.017 & 0.077 $\pm$ 0.009 & 0/20 \\
& HBOS    & 0.522 $\pm$ 0.053 & 0.840 $\pm$ 0.024 & 0.134 $\pm$ 0.012 & 0/20 \\
\midrule
\multirow{6}{*}{Musk}
& \textbf{HBOS}    & 1.000 $\pm$ 0.000 & 1.000 $\pm$ 0.000 & 0.070 $\pm$ 0.005 & 20/20 \\
& ECOD    & 1.000 $\pm$ 0.000 & 1.000 $\pm$ 0.000 & 0.088 $\pm$ 0.005 & 0/20 \\
& INNE    & 1.000 $\pm$ 0.000 & 1.000 $\pm$ 0.000 & 0.146 $\pm$ 0.004 & 0/20 \\
& LODA    & 0.959 $\pm$ 0.025 & 0.998 $\pm$ 0.001 & 0.082 $\pm$ 0.009 & 0/20 \\
& COPOD   & 0.497 $\pm$ 0.044 & 0.959 $\pm$ 0.006 & 0.193 $\pm$ 0.006 & 0/20 \\
& IForest & 0.449 $\pm$ 0.157 & 0.944 $\pm$ 0.027 & 0.221 $\pm$ 0.012 & 0/20 \\
\midrule
\multirow{6}{*}{Satellite}
& \textbf{HBOS}    & 0.815 $\pm$ 0.012 & 0.866 $\pm$ 0.009 & 0.135 $\pm$ 0.003 & 20/20 \\
& INNE    & 0.788 $\pm$ 0.011 & 0.839 $\pm$ 0.012 & 0.166 $\pm$ 0.003 & 0/20 \\
& IForest & 0.773 $\pm$ 0.016 & 0.804 $\pm$ 0.018 & 0.139 $\pm$ 0.005 & 0/20 \\
& LODA    & 0.716 $\pm$ 0.018 & 0.698 $\pm$ 0.019 & 0.166 $\pm$ 0.005 & 0/20 \\
& COPOD   & 0.679 $\pm$ 0.016 & 0.695 $\pm$ 0.017 & 0.181 $\pm$ 0.005 & 0/20 \\
& ECOD    & 0.645 $\pm$ 0.016 & 0.650 $\pm$ 0.019 & 0.190 $\pm$ 0.005 & 0/20 \\
\midrule
\multirow{6}{*}{Mammography}
& \textbf{ECOD}    & 0.504 $\pm$ 0.053 & 0.912 $\pm$ 0.017 & 0.067 $\pm$ 0.005 & 17/20 \\
& \textbf{COPOD}   & 0.502 $\pm$ 0.053 & 0.911 $\pm$ 0.018 & 0.077 $\pm$ 0.006 & 3/20 \\
& LODA    & 0.349 $\pm$ 0.040 & 0.887 $\pm$ 0.017 & 0.043 $\pm$ 0.007 & 0/20 \\
& IForest & 0.294 $\pm$ 0.048 & 0.882 $\pm$ 0.017 & 0.099 $\pm$ 0.006 & 0/20 \\
& INNE    & 0.275 $\pm$ 0.036 & 0.842 $\pm$ 0.019 & 0.289 $\pm$ 0.003 & 0/20 \\
& HBOS    & 0.146 $\pm$ 0.026 & 0.844 $\pm$ 0.018 & 0.106 $\pm$ 0.009 & 0/20 \\
\bottomrule
\end{tabularx}
\end{table*}

\begin{table*}[tp]
	\centering
	\caption{\textbf{Performance of conformal inference strategies across anomaly detection benchmarks.}
    All weighted methods employ WCS with \textit{\ul{deterministic} pruning} to guarantee finite-sample FDR control.
    Values represent the mean $\pm$ standard deviation of the empirical marginal FDR and statistical power aggregated over 20 independent trials with randomized splits.
    We compare deterministic and randomized baselines against the proposed continuous (KDE-based) approach.
    The calibration and test set sizes are denoted by $n_{\text{train}}$ and $n_{\text{test}}$. Underlined values indicate validity violations (see Section~\ref{sec:metrics}).}

	\label{tab:results_deterministic}
	\renewcommand{\arraystretch}{1.15}
	\setlength{\tabcolsep}{4.5pt}
	\begin{tabularx}{\textwidth}{l l c c c c p{0.5em} C C}
		\toprule
		\textbf{Dataset} & \textbf{Method} & \multicolumn{2}{c}{\textbf{Deterministic}} & \multicolumn{2}{c}{\textbf{Randomized}} & & $n_{\text{train}}$ & $n_{\text{test}}$\\
		\cmidrule(lr){3-4}\cmidrule(lr){5-6}
		                                          & \textit{Deterministic} & \textbf{FDR}    & \textbf{Power}  & \textbf{FDR}    & \textbf{Power}  &   &                        &                        \\
		\midrule
		\multirow{4}{*}{WBC}                      & EDF                    & $0.000\pm0.000$ & $0.000\pm0.000$ & $0.062\pm0.129$ & $0.200\pm0.332$ &   & \multirow{4}{*}{106}   & \multirow{4}{*}{56}    \\
		                                          & Weighted EDF           & $0.000\pm0.000$ & $0.000\pm0.000$ & $0.050\pm0.224$ & $0.017\pm0.074$ &   &                        &                        \\
		& \textbf{KDE} & $0.095\pm0.152$ & $0.500\pm0.351$ & \multicolumn{2}{c}{\textemdash} &  &  &  \\
		& \textbf{Weighted KDE} & $0.078\pm0.142$ & $0.417\pm0.373$ & \multicolumn{2}{c}{\textemdash} &  &  &  \\
		\cmidrule(lr){1-9}
		\multirow{4}{*}{Ionosphere}               & EDF                    & $0.000\pm0.000$ & $0.000\pm0.000$ & $0.076\pm0.199$ & $0.150\pm0.221$ &   & \multirow{4}{*}{112}   & \multirow{4}{*}{88}    \\
		                                          & Weighted EDF           & $0.000\pm0.000$ & $0.000\pm0.000$ & $0.000\pm0.000$ & $0.025\pm0.077$ &   &                        &                        \\
		& \textbf{KDE} & $0.081\pm0.150$ & $0.300\pm0.434$ & \multicolumn{2}{c}{\textemdash} &  &  &  \\
		& \textbf{Weighted KDE} & $0.047\pm0.119$ & $0.138\pm0.339$ & \multicolumn{2}{c}{\textemdash} &  &  &  \\
		\cmidrule(lr){1-9}
		\multirow{4}{*}{WDBC}                     & EDF                    & $0.098\pm0.165$ & $0.280\pm0.442$ & \ul{$0.166\pm0.197$} & $0.440\pm0.452$ &   & \multirow{4}{*}{178}   & \multirow{4}{*}{92}    \\
		                                          & Weighted EDF           & $0.000\pm0.000$ & $0.000\pm0.000$ & $0.000\pm0.000$ & $0.010\pm0.045$ &   &                        &                        \\
		& \textbf{KDE} & $0.086\pm0.135$ & $0.390\pm0.402$ & \multicolumn{2}{c}{\textemdash} &  &  &  \\
		& \textbf{Weighted KDE} & $0.095\pm0.164$ & $0.350\pm0.383$ & \multicolumn{2}{c}{\textemdash} &  &  &  \\
		\cmidrule(lr){1-9}
		\multirow{4}{*}{\shortstack{Breast Cancer\\(Wisconsin)}}
& EDF & $0.000\pm0.000$ & $0.000\pm0.000$ & $0.000\pm0.000$ & $0.094\pm0.145$ &  & \multirow{4}{*}{222} & \multirow{4}{*}{171} \\
		                                          & Weighted EDF           & $0.000\pm0.000$ & $0.000\pm0.000$ & $0.000\pm0.000$ & $0.006\pm0.025$ &   &                        &                        \\
		& \textbf{KDE} & $0.046\pm0.074$ & $0.350\pm0.296$ & \multicolumn{2}{c}{\textemdash} &  &  &  \\
		& \textbf{Weighted KDE} & $0.027\pm0.066$ & $0.267\pm0.218$ & \multicolumn{2}{c}{\textemdash} &  &  &  \\
		\cmidrule(lr){1-9}
		\multirow{4}{*}{Vowels}                   & EDF                    & $0.000\pm0.000$ & $0.000\pm0.000$ & $0.017\pm0.074$ & $0.067\pm0.071$ &   & \multirow{4}{*}{703}   & \multirow{4}{*}{364}   \\
		                                          & Weighted EDF           & $0.000\pm0.000$ & $0.000\pm0.000$ & $0.000\pm0.000$ & $0.000\pm0.000$ &   &                        &                        \\
		& \textbf{KDE} & $0.035\pm0.109$ & $0.122\pm0.082$ & \multicolumn{2}{c}{\textemdash} &  &  &  \\
		& \textbf{Weighted KDE} & $0.035\pm0.109$ & $0.117\pm0.082$ & \multicolumn{2}{c}{\textemdash} &  &  &  \\
		\cmidrule(lr){1-9}
		\multirow{4}{*}{Cardio}                   & EDF                    & $0.034\pm0.091$ & $0.039\pm0.097$ & \ul{$0.119\pm0.250$} & $0.067\pm0.096$ &   & \multirow{4}{*}{827}   & \multirow{4}{*}{458}   \\
		                                          & Weighted EDF           & $0.018\pm0.081$ & $0.015\pm0.068$ & $0.018\pm0.081$ & $0.024\pm0.068$ &   &                        &                        \\
		& \textbf{KDE} & $0.066\pm0.149$ & $0.089\pm0.079$ & \multicolumn{2}{c}{\textemdash} &  &  &  \\
		& \textbf{Weighted KDE} & $0.035\pm0.098$ & $0.087\pm0.080$ & \multicolumn{2}{c}{\textemdash} &  &  &  \\
		\cmidrule(lr){1-9}
		\multirow{4}{*}{Musk}                     & EDF                    & $0.102\pm0.060$ & $1.000\pm0.000$ & \ul{$0.105\pm0.060$} & $1.000\pm0.000$ &   & \multirow{4}{*}{1,482} & \multirow{4}{*}{766}   \\
		                                          & Weighted EDF           & $0.090\pm0.060$ & $0.950\pm0.224$ & $0.100\pm0.057$ & $1.000\pm0.000$ &   &                        &                        \\
		& \textbf{KDE} & $0.084\pm0.060$ & $1.000\pm0.000$ & \multicolumn{2}{c}{\textemdash} &  &  &  \\
		& \textbf{Weighted KDE} & $0.082\pm0.060$ & $1.000\pm0.000$ & \multicolumn{2}{c}{\textemdash} &  &  &  \\
		\cmidrule(lr){1-9}
		\multirow{4}{*}{Satellite}                & EDF                    & \ul{$0.109\pm0.107$} & $0.259\pm0.082$ & \ul{$0.108\pm0.105$} & $0.267\pm0.085$ &   & \multirow{4}{*}{2,199} & \multirow{4}{*}{1,609} \\
		                                          & Weighted EDF           & $0.094\pm0.105$ & $0.214\pm0.124$ & $0.099\pm0.104$ & $0.233\pm0.113$ &   &                        &                        \\
		& \textbf{KDE} & \ul{$0.117\pm0.103$} & $0.291\pm0.087$ & \multicolumn{2}{c}{\textemdash} &  &  &  \\
		& \textbf{Weighted KDE} & \ul{$0.112\pm0.098$} & $0.284\pm0.070$ & \multicolumn{2}{c}{\textemdash} &  &  &  \\
		\cmidrule(lr){1-9}
		\multirow{4}{*}{Mammography}              & EDF                    & $0.019\pm0.037$ & $0.052\pm0.057$ & $0.026\pm0.040$ & $0.069\pm0.053$ &   & \multirow{4}{*}{5,461} & \multirow{4}{*}{2,796} \\
		                                          & Weighted EDF           & $0.017\pm0.037$ & $0.038\pm0.049$ & $0.017\pm0.037$ & $0.039\pm0.048$ &   &                        &                        \\
		& \textbf{KDE} & $0.045\pm0.060$ & $0.085\pm0.050$ & \multicolumn{2}{c}{\textemdash} &  &  &  \\
		& \textbf{Weighted KDE} & $0.037\pm0.051$ & $0.077\pm0.048$ & \multicolumn{2}{c}{\textemdash} &  &  &  \\
		\bottomrule
	\end{tabularx}
\end{table*}

\begin{table*}[tp]
\centering
\caption{\textbf{Performance of conformal inference strategies across anomaly detection benchmarks.}
All weighted methods employ WCS with \textit{\ul{heterogeneous} pruning} to guarantee finite-sample FDR control.
Values represent the mean $\pm$ standard deviation of the empirical marginal FDR and statistical power aggregated over 20 independent trials with randomized splits.
We compare deterministic and randomized baselines against the proposed continuous (KDE-based) approach.
The calibration and test set sizes are denoted by $n_{\text{train}}$ and $n_{\text{test}}$. Underlined values indicate validity violations (see Section~\ref{sec:metrics}).}

\label{tab:results_heterogeneous}
\renewcommand{\arraystretch}{1.15}
\setlength{\tabcolsep}{4.5pt}
\begin{tabularx}{\textwidth}{l l c c c c p{0.5em} C C}
\toprule
\textbf{Dataset} & \textbf{Method}
& \multicolumn{2}{c}{\textbf{Deterministic}}
& \multicolumn{2}{c}{\textbf{Randomized}}
&
& $n_{\text{train}}$
& $n_{\text{test}}$\\
\cmidrule(lr){3-4}\cmidrule(lr){5-6}
& \textit{Heterogeneous} & \textbf{FDR} & \textbf{Power} & \textbf{FDR} & \textbf{Power} & & \\
\midrule

\multirow{4}{*}{WBC}
& EDF                   & $0.000\pm0.000$ & $0.000\pm0.000$ & $0.062\pm0.129$ & $0.200\pm0.332$ & & \multirow{4}{*}{106} & \multirow{4}{*}{56} \\
& Weighted EDF          & $0.000\pm0.000$ & $0.000\pm0.000$ & $0.075\pm0.245$ & $0.083\pm0.183$ & &  &   \\
& \textbf{KDE}          & $0.095\pm0.152$ & $0.500\pm0.351$ & \multicolumn{2}{c}{\textemdash} & &  &   \\
& \textbf{Weighted KDE} & $0.078\pm0.142$ & $0.417\pm0.373$ & \multicolumn{2}{c}{\textemdash} & &  &   \\
\cmidrule(lr){1-9}

\multirow{4}{*}{Ionosphere}
& EDF                   & $0.000\pm0.000$ & $0.000\pm0.000$ & $0.076\pm0.199$ & $0.150\pm0.221$ & & \multirow{4}{*}{112} & \multirow{4}{*}{88} \\
& Weighted EDF          & $0.000\pm0.000$ & $0.000\pm0.000$ & $0.042\pm0.131$ & $0.075\pm0.143$ & &  &   \\
& \textbf{KDE}          & $0.081\pm0.150$ & $0.300\pm0.434$ & \multicolumn{2}{c}{\textemdash} & & &   \\
& \textbf{Weighted KDE} & $0.047\pm0.119$ & $0.138\pm0.339$ & \multicolumn{2}{c}{\textemdash} & &  &   \\
\cmidrule(lr){1-9}

\multirow{4}{*}{WDBC}
& EDF                   & $0.098\pm0.165$ & $0.280\pm0.442$ & \ul{$0.166\pm0.197$} & $0.440\pm0.452$ & & \multirow{4}{*}{178} & \multirow{4}{*}{92} \\
& Weighted EDF          & $0.000\pm0.000$ & $0.000\pm0.000$ & \ul{$0.142\pm0.231$} & $0.100\pm0.138$ &  &  &  \\
& \textbf{KDE}          & $0.086\pm0.135$ & $0.390\pm0.402$ & \multicolumn{2}{c}{\textemdash} &  &  &  \\
& \textbf{Weighted KDE} & $0.095\pm0.164$ & $0.350\pm0.383$ & \multicolumn{2}{c}{\textemdash} &  &  &  \\
\cmidrule(lr){1-9}

\multirow{4}{*}{\shortstack{Breast Cancer\\(Wisconsin)}}
& EDF                   & $0.000\pm0.000$ & $0.000\pm0.000$ & $0.000\pm0.000$ & $0.094\pm0.145$ & & \multirow{4}{*}{222} & \multirow{4}{*}{171} \\
& Weighted EDF          & $0.000\pm0.000$ & $0.000\pm0.000$ & $0.000\pm0.000$ & $0.044\pm0.084$ &  &  &  \\
& \textbf{KDE}          & $0.046\pm0.074$ & $0.350\pm0.296$ & \multicolumn{2}{c}{\textemdash} &  &  &  \\
& \textbf{Weighted KDE} & $0.027\pm0.066$ & $0.267\pm0.218$ & \multicolumn{2}{c}{\textemdash} &  &  &  \\
\cmidrule(lr){1-9}

\multirow{4}{*}{Vowels}
& EDF                   & $0.000\pm0.000$ & $0.000\pm0.000$ & $0.017\pm0.074$ & $0.067\pm0.071$ & & \multirow{4}{*}{703} & \multirow{4}{*}{364} \\
& Weighted EDF          & $0.000\pm0.000$ & $0.000\pm0.000$ & $0.000\pm0.000$ & $0.011\pm0.029$ &  &  &  \\
& \textbf{KDE}          & $0.035\pm0.109$ & $0.122\pm0.082$ & \multicolumn{2}{c}{\textemdash} &  &  &  \\
& \textbf{Weighted KDE} & $0.035\pm0.109$ & $0.117\pm0.082$ & \multicolumn{2}{c}{\textemdash} &  &  &  \\
\cmidrule(lr){1-9}

\multirow{4}{*}{Cardio}
& EDF                   & $0.034\pm0.091$ & $0.039\pm0.097$ & $0.119\pm0.250$ & $0.067\pm0.096$ & & \multirow{4}{*}{827} & \multirow{4}{*}{458} \\
& Weighted EDF          & $0.018\pm0.081$ & $0.015\pm0.068$ & $0.035\pm0.107$ & $0.028\pm0.069$ &  & &  \\
& \textbf{KDE}          & $0.066\pm0.149$ & $0.089\pm0.079$ & \multicolumn{2}{c}{\textemdash} &  &  & \\
& \textbf{Weighted KDE} & $0.035\pm0.098$ & $0.087\pm0.080$ & \multicolumn{2}{c}{\textemdash} &  &  &  \\
\cmidrule(lr){1-9}

\multirow{4}{*}{Musk}
& EDF                   & $0.102\pm0.060$ & $1.000\pm0.000$  & \ul{$0.105\pm0.060$} & $1.000\pm0.000$ & & \multirow{4}{*}{1,482} & \multirow{4}{*}{766} \\
& Weighted EDF          & $0.096\pm0.056$ & $1.000\pm0.000$   & $0.103\pm0.056$ & $1.000\pm0.000$ &  &  &  \\
& \textbf{KDE}          & $0.084\pm0.060$ & $1.000\pm0.000$  & \multicolumn{2}{c}{\textemdash} &  &  &  \\
& \textbf{Weighted KDE} & $0.082\pm0.060$ & $1.000\pm0.000$  & \multicolumn{2}{c}{\textemdash} &  &  &  \\
\cmidrule(lr){1-9}

\multirow{4}{*}{Satellite}
& EDF                   & \ul{$0.109\pm0.107$} & $0.259\pm0.082$ & \ul{$0.108\pm0.105$} & $0.267\pm0.085$ & & \multirow{4}{*}{2,199} & \multirow{4}{*}{1,609} \\
& Weighted EDF          & $0.104\pm0.099$ & $0.249\pm0.085$ & \ul{$0.107\pm0.103$} & $0.261\pm0.081$ &  &  &  \\
& \textbf{KDE}          & \ul{$0.117\pm0.103$} & $0.291\pm0.087$ & \multicolumn{2}{c}{\textemdash} &  &  &  \\
& \textbf{Weighted KDE} & \ul{$0.112\pm0.098$} & $0.284\pm0.070$ & \multicolumn{2}{c}{\textemdash} &  &  &  \\
\cmidrule(lr){1-9}

\multirow{4}{*}{Mammography}
& EDF                   & $0.019\pm0.037$ & $0.052\pm0.057$ & $0.026\pm0.040$ & $0.069\pm0.053$ & & \multirow{4}{*}{5,461} & \multirow{4}{*}{2,796} \\
& Weighted EDF          & $0.017\pm0.037$ & $0.038\pm0.049$ & $0.017\pm0.037$ & $0.045\pm0.045$ &  &  &  \\
& \textbf{KDE}          & $0.045\pm0.060$ & $0.085\pm0.050$ & \multicolumn{2}{c}{\textemdash} &  &  &  \\
& \textbf{Weighted KDE} & $0.037\pm0.051$ & $0.077\pm0.048$ & \multicolumn{2}{c}{\textemdash} &  &  & \\
\bottomrule
\end{tabularx}
\end{table*}

\clearpage

\newpage

\section{Proof Sketch of Theorem~\ref{thm:asymp_valid}}
\label{sec:proof_sketch}

\begin{proof}[Proof sketch of Theorem~\ref{thm:asymp_valid}]
The argument proceeds in three steps.

\paragraph{Step 1 (Uniform CDF consistency).}
Decompose via the triangle inequality:
\[
\sup_t\bigl|\widehat F_0^w(t)-F_0^w(t)\bigr|
\le
\underbrace{\sup_t\bigl|\widehat F_0^w(t)-\widetilde F_0^w(t)\bigr|}_{(\mathrm{I})}
+
\underbrace{\sup_t\bigl|\widetilde F_0^w(t)-F_0^w(t)\bigr|}_{(\mathrm{II})},
\]
where $\widetilde F_0^w$ denotes the oracle weighted kernel CDF estimator using the true weights $w$.

For term~$(\mathrm{II})$, write
\[
\sup_t\bigl|\widetilde F_0^w(t)-F_0^w(t)\bigr|
\le
\underbrace{\sup_t\bigl|\widetilde F_0^w(t)-\mathbb E[\widetilde F_0^w(t)\mid w]\bigr|}_{\text{stochastic}}
+
\underbrace{\sup_t\bigl|\mathbb E[\widetilde F_0^w(t)\mid w]-F_0^w(t)\bigr|}_{\text{bias}}.
\]
Under standard regularity conditions for weighted smoothed empirical distribution estimators
(e.g.\ bounded oracle weights, $\sum_i (w_i/\sum_k w_k)^2 = O_p(N^{-1})$, and a bounded monotone kernel CDF $\Phi_K$),
the stochastic term is $O_p(N^{-1/2})$ uniformly in $t$.
If the weighted null density $f_0^w=(F_0^w)'$ has $r$ derivatives and the kernel is of order $r$,
then the bias term is $O(h_N^r)$, hence vanishes when $h_N\to0$.
Therefore
\[
\sup_t\bigl|\widetilde F_0^w(t)-F_0^w(t)\bigr| \to 0
\quad\text{in probability.}
\]

For term~$(\mathrm{I})$, since $\Phi_K\in[0,1]$,
\[
(\mathrm{I})
\le
\sum_{i=1}^N
\left|
\frac{\hat w_i}{\sum_k\hat w_k}
-
\frac{w_i}{\sum_k w_k}
\right|.
\]
Hence it suffices that the normalized weight vectors are $\ell_1$-consistent; for example,
this holds if
\[
\frac1N\sum_{i=1}^N |\hat w_i-w_i| = o_p(1)
\quad\text{and}\quad
\frac1N\sum_{i=1}^N w_i \xrightarrow{p} c>0.
\]
Thus $(\mathrm{I})\xrightarrow{p}0$ as well.

\paragraph{Step 2 ($p$-value consistency).}
Since $\hat p(z)=1-\widehat F_0^w(s(z))$ and $p^*(z)=1-F_0^w(s(z))$:
\[
\sup_z\bigl|\hat p(z)-p^*(z)\bigr|
\;\le\;
\sup_t\bigl|\widehat F_0^w(t)-F_0^w(t)\bigr|
\;\xrightarrow{p}\;0.
\]

\paragraph{Step 3 (Asymptotic super-uniformity).}
Under $H_0$, $Z\sim Q_0$ is independent of the calibration data. Since $F_0^w$ is continuous, the probability integral transform gives $p^*(Z)\sim\mathrm{Unif}[0,1]$. Conditional on the calibration sample:
\[
\mathbb P_Z\!\bigl(\hat p(Z)\le u\mid\mathrm{cal}\bigr)
\;\le\;
\mathbb P_Z\!\bigl(p^*(Z)\le u+\|\hat p-p^*\|_\infty\mid\mathrm{cal}\bigr)
\;=\;
u+\|\hat p-p^*\|_\infty.
\]
Taking expectations over the calibration data and applying dominated convergence (since $\|\hat p-p^*\|_\infty\le 1$):
\[
\mathbb P\bigl(\hat p(Z)\le u\bigr)
\;\le\;
u+\mathbb E\bigl[\|\hat p-p^*\|_\infty\bigr]
\;=\;
u+o(1).
\qedhere
\]
\end{proof}

\clearpage

\newpage

\section{Stabilizing Weight Estimation}
\label{sec:stable_weight_est}

When $|D_{\text{calib}}|$ and $|D_{\text{test}}|$ differ strongly, classifier-based density-ratio (importance-weight) estimation can become high-variance, yielding \textit{spiky} weight distributions. As discussed in Section~\ref{sec:dilemma}, excessively large test weights inflate the lower bound of the conservative weighted conformal $p$-value and can lead to strict power loss. To mitigate this instability, we use a balanced bootstrap bagging scheme followed by mild clipping.

\paragraph{Balanced bootstrap bagging.}
Let $B$ be the number of bootstrap iterations and define the balanced sample size
\[
S=\min\{|D_{\text{calib}}|,\ |D_{\text{test}}|\}.
\]
For each bootstrap iteration $b\in\{1,\dots,B\}$, we sample with replacement $S$ points from $D_{\text{calib}}$ and $S$ points from $D_{\text{test}}$, forming a balanced training set $\mathcal{D}^{(b)}$. We train a probabilistic classifier $g^{(b)}$ to distinguish test versus calibration membership, and then evaluate it on the full original pool
\[
Z = D_{\text{calib}} \cup D_{\text{test}}.
\]
We intentionally evaluate on all of $Z$ (rather than using out-of-bag predictions), prioritizing estimator stability over the potential bias reduction of out-of-bag aggregation.

Let $g^{(b)}(z)$ denote the predicted probability of the test label, i.e.\ $g^{(b)}(z)\approx \mathbb P(Y=1\mid Z=z)$ under the bootstrap training mixture in iteration $b$ (with $Y=1$ indicating test and $Y=0$ indicating calibration). Under standard density-ratio modeling, Bayes' rule yields
\[
\frac{p_{\text{test}}(z)}{p_{\text{calib}}(z)}
=
\frac{\mathbb P(Y=0)}{\mathbb P(Y=1)}\cdot \frac{\mathbb P(Y=1\mid Z=z)}{\mathbb P(Y=0\mid Z=z)}
\approx
\frac{\mathbb P(Y=0)}{\mathbb P(Y=1)}\cdot \frac{g^{(b)}(z)}{1-g^{(b)}(z)}.
\]
Because $\mathcal{D}^{(b)}$ is class-balanced, $\mathbb P(Y=1)=\mathbb P(Y=0)=1/2$ in training, and the prior-ratio factor equals $1$. Hence, the per-bootstrap weight estimate simplifies to
\begin{equation}
\hat w^{(b)}(z)=\frac{g^{(b)}(z)}{1-g^{(b)}(z)}.
\end{equation}
This is a special case of the general density ratio estimate in Section~\ref{subsec:weight_estimation}: since $\mathcal{D}^{(b)}$ is class-balanced, the prior correction factor $N_{\text{cal}}/N_{\text{test}}$ reduces to unity. If a bootstrap replicate is not exactly balanced, we include the corresponding correction factor $\mathbb P(Y=0)/\mathbb P(Y=1)$.

\paragraph{Aggregation across bootstrap replicas.}
We aggregate the bootstrap estimates via geometric averaging (equivalently, averaging in log-space):
\begin{equation}
\hat w_{\mathrm{bag}}(z)
=
\exp\!\left(\frac{1}{B}\sum_{b=1}^{B}\log \hat w^{(b)}(z)\right).
\end{equation}
This aggregation is natural for positive, multiplicative quantities such as density ratios and reduces the effect of rare extreme values produced by individual bootstrap classifiers.

\paragraph{Winsorization for numerical stability.}
Finally, we winsorize $\hat w_{\mathrm{bag}}$ by clipping to empirical quantiles. Let $q_\gamma$ and $q_{1-\gamma}$ be the empirical $\gamma$ and $(1-\gamma)$ quantiles of $\{\hat w_{\mathrm{bag}}(z): z\in Z\}$. Define
\begin{equation}
\hat w_{\mathrm{final}}(z)
=
\min\!\left\{q_{1-\gamma},\max\!\left\{q_{\gamma},\hat w_{\mathrm{bag}}(z)\right\}\right\}.
\end{equation}
This step should be viewed as a variance-reduction regularization layer: it limits extreme weights in highly skewed settings at the possible cost of a small clipping-induced bias.

\paragraph{Fairness across methods.}
Crucially, for all weighted methods we reuse the \emph{exact same} stabilized weights $\hat w_{\mathrm{final}}(z)$ (from the same bagging and winsorization procedure) across evaluations, ensuring strictly comparable results that are not confounded by method-specific weight estimation noise.

\end{document}